\newtheorem{proposition}{Proposition}
\colorlet{my-red}{BrickRed!90!Sepia}
\colorlet{my-blue}{Aquamarine!30!Blue}
\newtcolorbox{takeawaybox}[2][]{takeawaybox,title=#2,#1}
\def\eqref#1{equation~\ref{#1}}
\def\1{\bm{1}}
\newcommand{\blueul}[1]{\color{blue}{\underline{#1}}}
\title{Feedback Descent: Open-Ended Text Optimization via Pairwise Comparison}
\author[1]{Yoonho Lee}
\author[1]{Joseph Boen}
\author[1]{Chelsea Finn}
\affil[1]{Stanford University}
\begin{document}

\begin{abstract}
We introduce \textit{Feedback Descent}, a framework that optimizes text artifacts---prompts, code, and molecules---through structured textual feedback rather than relying solely on scalar rewards. By preserving detailed critiques instead of compressing them to binary preferences, Feedback Descent widens the information bottleneck in preference learning, enabling directed optimization in text space rather than weight space. We show that in-context learning can transform structured feedback into gradient-like directional information, enabling targeted edits. Unlike prior approaches that collapse judgments into single bits, our evaluators pair each comparison with textual feedback, which functions as high-bandwidth supervision. The iteration loop is done purely at inference time, without modifying any model weights, and is task-agnostic. We evaluate Feedback Descent on three diverse domains and find that it outperforms state-of-the-art prompt optimization (GEPA), reinforcement learning methods (GRPO, REINVENT), and even specialized graph-based molecular optimizers. In the DOCKSTRING molecule discovery benchmark, Feedback Descent identifies novel drug-like molecules surpassing the $99.9$th percentile of a database with more than $260{,}000$ compounds across six protein targets.
\end{abstract}

\maketitle

\section{Introduction}

A central goal of machine learning is building systems that can perform tasks beyond human capabilities. Reinforcement learning is a powerful framework that accomplishes this goal, since it can optimize with respect to feedback on its own outputs, rather than relying on supervised examples of desired outputs. Indeed, recent language models have demonstrated impressive feats in domains like math and programming~\citep{openai2024o1,deepseek2025r1,alphageometry2025,zhu2024deepseek} through a combination of reinforcement learning and text-based reasoning. Unfortunately, existing reinforcement learning frameworks are designed to learn from impoverished supervision signals, typically either scalar rewards or pairwise preference data, where each annotation conveys at most a single bit per pair. These bottlenecks discard information about \textit{why} one behavior is better and \textit{how} to improve---information available in environment feedback or easily elicited from humans during annotation~\citep{wu2023finegrained,just2024datacentric}.

Our goal is to widen this information bottleneck, i.e., significantly increase the information the system can extract per unit of experience~\citep{silver2025era}.
Collecting more detailed feedback is straightforward, e.g., with brief rationales explaining preferences; the challenge is turning such feedback into measurable improvement.
Because free-form feedback does not define a differentiable objective, it cannot directly drive weight updates via backpropagation.
Our approach iterates at inference time, using language models to translate accumulated feedback into targeted edits of text artifacts (prompts, code, molecules, JSON configs)
that improve a final performance objective, without any weight updates.

To that end, we introduce \textit{Feedback Descent},
a framework for continual optimization in text space.
At each iteration, we prompt a language model to propose an improved version of the current best artifact, conditioned on all previous feedback. We compare this candidate against the current best, and the evaluator returns a preference along with textual feedback explaining the choice.
If the candidate is preferred, it becomes the new best.
Repeating this loop yields semantically local, feedback-aligned improvements that implement gradient-like steps in text space.
See~\cref{fig:pull_figure} for a conceptual illustration.
We provide theoretical intuition for why Feedback Descent can be effective. Under appropriate assumptions about feedback quality and problem structure, we demonstrate that textual feedback can provide directional information, enabling efficient optimization.

Our contributions are threefold.
\textbf{First}, we formalize why textual feedback enables dimension-free convergence while zeroth-order methods suffer exponential slowdown with effective dimensionality, identifying when and why structured feedback outperforms scalar rewards.
\textbf{Second}, we demonstrate cross-domain generality: Feedback Descent works across three qualitatively distinct domains (visual design, prompt optimization, molecule design) with the same iterative loop.
\textbf{Third}, we validate competitive or superior performance versus specialized methods, achieving competitive results with the state-of-the-art in prompt optimization (GEPA) while outperforming a reinforcement learning baseline (GRPO). In the molecule design experiment, Feedback Descent outperforms specialized molecular optimizers (Graph MCTS/GA, REINVENT) despite operating purely on text representations, discovering molecules that surpass the 99.9th percentile of a 260,000-compound database of molecules.

\begin{figure}[t]
\centering \includegraphics[width=0.9\linewidth]{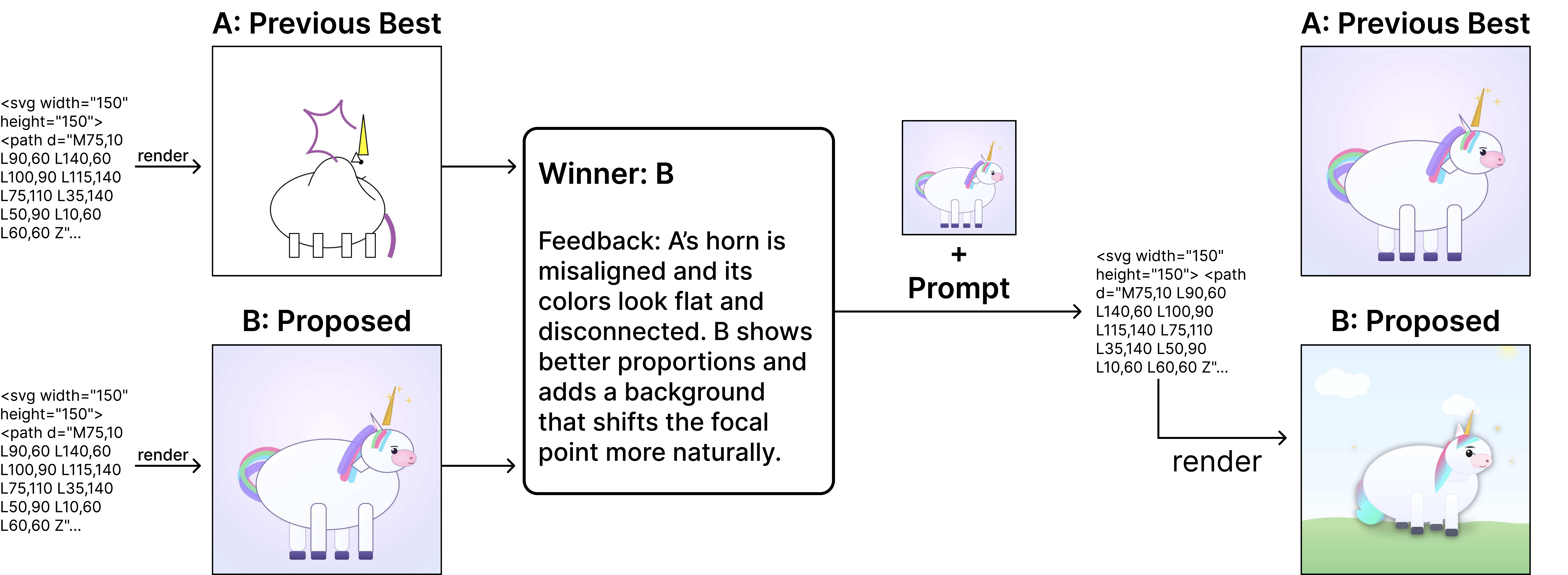}
\caption{A conceptual illustration of feedback descent. At each iteration, we compare the previous best artifact with a new candidate. The evaluator provides both a pairwise preference and textual feedback. Preferences ensure the selection of better candidates, while feedback accumulates directional information that guides semantically meaningful edits.}
\label{fig:pull_figure}
\end{figure}

\section{Feedback Descent: Open-Ended Text Optimization}

We propose Feedback Descent, a framework for open-ended optimization of text-representable artifacts whose quality is easier to \emph{judge} than to \emph{construct}. Feedback Descent converts comparative textual feedback into directed semantic edits and iterates in a self-improvement loop.
As a running example, consider optimizing SVG code to render better images of a unicorn. Current vision-language models can reliably compare two renderings and explain the choice, even if writing high-quality SVG from scratch is difficult. Through Feedback Descent, we can convert these explanations into directed edits that aim to produce an artifact that surpasses all previous ones.

\subsection{Problem Setup}
Let $\mathcal{S}$ be the space of token sequences, and let $x\in\mathcal{S}$ denote an artifact (e.g., SVG code). Given the current best $x_t^\star\in\mathcal{S}$ and a candidate $x\in\mathcal{S}$, the evaluator returns
\begin{equation}
    \label{eq:evaluator}
\mathsf{E}(x, x_t^\star) \to \big(p\in\{0,1\},\ r\in\mathcal{S}\big),
\end{equation}
where $p=1$ indicates $x \succ x_t^\star$ and $r$ is a textual feedback explaining \emph{why} the winner is better and \emph{how} to improve. We append $r_t$ to a history $\mathcal{R}_t=\{(x_1, r_1),\ldots,(x_t, r_t)\}$ and iterate, keeping track of the current best artifact $x_t^\star$.

\subsection{Feedback Descent}

Feedback Descent operates as an iterative optimization loop that maintains a single best artifact $x_t^*$ and progressively improves it through feedback-guided mutations and comparative evaluation. Throughout, we use $\mathcal{M}$ to denote the language model used for generating improved candidates.

\textbf{Initialization and termination.}
We initialize $x_0^*$ by prompting a language model with the task description alone (e.g., "Generate SVG code for a unicorn"). The algorithm runs for a fixed budget of $T$ iterations or until convergence (defined as no improvement for $k$ consecutive iterations).

\textbf{Proposing semantic mutations via prompting.}
The mutation step leverages a language model's in-context learning capabilities. At iteration $t$, we prompt the model with the current best artifact $x_t^*$ and accumulated feedback $\mathcal{R}_{t-1}$ to generate an improved candidate:
\begin{equation}
    \label{eq:mutation}
x_t = \mathcal{M}\left(x_t^*, \mathcal{R}_{t-1}\right)
\end{equation}
The prompt instructs the model to address previous critiques while preserving successful elements. These prompts are intentionally minimal: the optimization signal comes from the accumulated feedback rather than heavy prompt engineering.
\begin{wrapfigure}{r}{0.48\linewidth}
\vspace{-7mm}
\begin{minipage}{\linewidth}
\begin{algorithm}[H]
\caption{\label{alg:feedback_descent} Feedback Descent}
\begin{algorithmic}[1]
\Require Initial text $x_0$, Language model $\mathcal{M}$, $T$
\State $\textrm{Current best: } x^* \leftarrow x_0, \textrm{Rationale history: } \mathcal{R} \leftarrow \emptyset$
\For{$t = 1$ \textbf{to} $T$}
    \State $x_t \leftarrow \mathcal{M}(x^*, \mathcal{R})$ \Comment{Propose (\ref{eq:mutation})}
    \State $p_t, r_t \leftarrow \textsc{Compare}(x_t, x^*)$ \Comment{Compare (\ref{eq:evaluator})}
    \State $\mathcal{R} \leftarrow \mathcal{R} \cup \{(x_t, r_t)\}$
    \If{$p_t = 1$}
        \State $x^* \leftarrow x_t, \mathcal{R} \leftarrow \emptyset$ \Comment{Update + reset}
    \EndIf
\EndFor
\State \Return $x^*$
\end{algorithmic}
\end{algorithm}
\vspace{-7mm}
\end{minipage}
\end{wrapfigure}
They include basic task context, the current best artifact, and feedback from previous comparisons. Complete prompt templates for each domain are provided in~\cref{app:prompts}.

\textbf{Selection and update.}
We compare the new candidate $x_t$ against the current best $x_t^*$ using the evaluator $\mathsf{E}(x_t, x_t^*)$, which returns both a binary preference $p_t$ and a textual feedback $r_t$.
In our running SVG example, examples of feedback include ``adjust the stroke width'', ``make sure the legs are connected to the body'', and ``add a shadow to the unicorn's mane''. Regardless of the preference outcome, we always add the feedback to our history: $\mathcal{R}_{t+1} = \mathcal{R}_t \cup \{(x_t, r_t)\}$. If $p_t = 1$ (candidate is preferred), we update $x_{t+1}^* = x_t$; otherwise we keep $x_{t+1}^* = x_t^*$. We summarize the overall process in \cref{alg:feedback_descent}.

\subsection{Analogy to Gradient Descent}
The key algorithmic insight is best understood by analogy to the gradient descent algorithm. Just as gradients provide the direction of steepest ascent under local linearity, textual feedback can suggest plausible directions of improvement in semantic space. For our SVG example, if the feedback indicates ``needs more defined horn shape,'' we expect that a small edit to the horn shape that preserves overall structure will likely be an improvement.

Of course, textual feedback is not a literal gradient. It is approximate and occasionally contradictory---optimization with such feedback does not have convergence guarantees in the same way that gradient descent does. Instead, feedback acts as a heuristic directional cue, offering higher-bandwidth supervision than a binary preference signal or a scalar reward, just as first-order optimization is fundamentally faster than zeroth-order optimization~\citep{nemirovski1983problem,agarwal2011information,nesterov2017random}. We hypothesize that an open-ended optimization loop based on such cues can succeed, supported by prior evidence that language models reliably translate textual instructions into concrete modifications. Examples include generating code changes~\citep{chen2021evaluating, austin2021program, nijkamp2022codegen, wang2023codet5+, roziere2023code, guo2024deepseek, lozhkov2024starcoder2, codegemma2024}, following complex multi-step instructions~\citep{ouyang2022training, wei2022finetuned, chung2022scaling, longpre2023flan, zhang2024instruction}, targeted text modifications~\citep{schick2022peer, du2022read, madaan2023selfrefine, welleck2023generating, mishra2023help}, and decomposing high-level goals into executable action sequences~\citep{schick2023toolformer, parisi2022talm, yao2022react, qin2023tool,wang2024llm,agarwal2025toolrm}.

\textbf{Why directional information helps.}
Zeroth-order methods that rely only on function evaluations or binary preferences suffer severe dimension-dependent slowdowns: convergence rates degrade exponentially as the search space grows~\citep{nemirovski1983problem,nesterov2017random}.
In contrast, first-order methods exploit gradient information to achieve dimension-free convergence under standard assumptions.
Textual feedback provides an approximation to such directional information.
Even when individual rationales are imperfect, their aggregate message across failures continually refines the direction of improvement.
We formalize this intuition in~\cref{app:proof}, showing that under idealized assumptions, rationale-guided updates can achieve linear convergence rates independent of effective dimensionality, while zeroth-order baselines scale exponentially worse.
These results provide motivation rather than rigorous guarantees for the discrete text domains we study empirically.
In~\cref{sec:experiments}, we show that Feedback Descent indeed produces consistent improvements across tasks, validating that such heuristic directional cues are sufficient to drive open-ended text optimization.

\section{Related Work}

\textbf{Preference Learning.}
Preference learning methods learn from pairwise comparisons~\citep{christiano2017deep,ouyang2022training,azar2023general,ethayarajh2024kto,munos2024nash}; recent advances include bypassing the need for a reward model~\citep{rafailov2023direct}, iterative optimization under KL constraints~\citep{xiong2023iterative}, and adaptive scaling techniques~\citep{wang2024adaptive}.
However, these methods fundamentally compress complex human reasoning into binary or scalar preferences, thereby forgoing the rich explanatory content that humans can naturally provide alongside their judgments~\citep{wirth2017survey}. Recent work shows that fine-grained feedback significantly improves reward modeling~\citep{wu2023finegrained,yu2024selfgenerated}, and incorporating rationales alongside preferences provides richer training signals~\citep{just2024datacentric}. Unlike prior work that uses rationales to improve weight-based training, we leverage detailed textual feedback entirely at inference time, widening the information bottleneck without requiring retraining.

\textbf{Evolutionary Algorithms and Gradient-Free Optimization.}
Feedback Descent can be viewed as an evolutionary algorithm~\citep{golberg1989genetic,holland1992adaptation}, in which candidates are iteratively mutated and accepted based on fitness.
While the black-box nature of modern LLMs has spurred interest in applying gradient-free approaches~\citep{guo2023match,sun2024bbtv2,chen2024derivative,lange2024evolution}, these methods face fundamental challenges in high-dimensional spaces.
More broadly, zeroth-order methods~\citep{chen2019zoadamm} face convergence rates that scale poorly with dimension, which is consistent with our experimental results comparing with reinforcement learning methods in~\cref{sec:experiments}.
To our best knowledge,~\citet{wang2024efficient} was the first prior work to demonstrate that LLMs can be effective molecule optimizers inside evolutionary algorithm loops.
Feedback Descent explores whether textual rationales can provide useful directional information for optimization, similar to how~\citet{nie2024importance} shows that LLMs can be effective optimizers when provided with directional feedback from historical traces.
Our contribution is in operationalizing an effective \textit{directed mutation operator} via accumulated textual feedback.

\textbf{Optimizing Compound AI Systems.}
Compound AI systems, i.e., modular architectures involving multiple LLM invocations and complex control flow, such as agents or scaffolding techniques~\citep{yao2022react}, present unique optimization challenges due to their modularity. Several approaches have emerged to tackle this complexity, including optimization for searching and bootstrapping few-shot in-context examples~\citep{khattab2022dspy,khattab2024dspy,opsahl2024optimizing} and reflective prompt evolution~\citep{agrawal2025gepa}.
The closest prior work to Feedback Descent is TextGrad~\citep{yuksekgonul2024textgrad}, which proposes PyTorch-like framework for automatic differentiation via text, backpropagating textual gradients through computation graphs. A core difference is that TextGrad optimizes ``pointwise'', i.e., it proposes a new instance of the artifact based only on the latest one. In contrast, Feedback Descent keeps an explicit trajectory-level buffer of comparative feedback. As we will see in~\cref{sec:experiments}, Feedback Descent is much more scalable at long-horizon optimization compared to TextGrad.

\begin{figure}[t]
\centering
\includegraphics[width=0.9\linewidth]{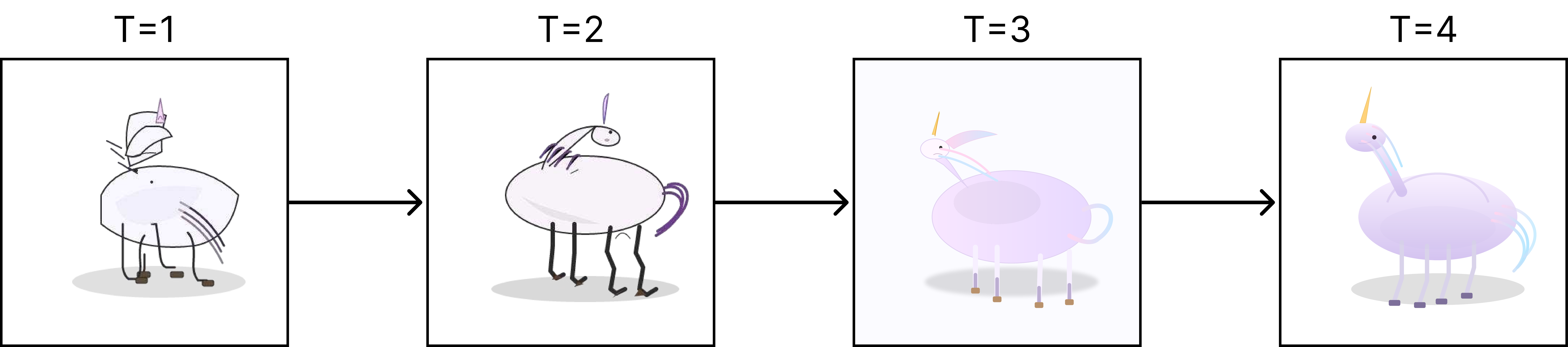}
\caption{Iterative progression of SVG unicorn optimization under the realism judge. \textbf{Feedback Descent produces gradual, semantically meaningful improvements through accumulating directional cues.}}
\label{fig:visual_progression}
\end{figure}

\textbf{Inference-Time Optimization for LLMs.}
Inference-time optimization improves performance without weight updates by performing additional computation at generation. This paradigm includes self-critique and refinement cycles (constitution-guided critique~\citep{bai2022constitutional}; Self-Refine~\citep{madaan2023selfrefine}) test-time scaling via best-of-$N$, multi-step reasoning, and tree search~\citep{cobbe2021training,zelikman2022star,yao2023tree}, and iterative prompt optimization~\citep{zhou2022ape,yang2023opro,pryzant2023protegi}. Several works report that strategically allocating inference-time compute yields large gains~\citep{snell2024scaling,muennighoff2025s1,geiping2025scaling,zhou2025thinking}.

We build on the growing consensus that natural language is a particularly powerful medium for inference-time improvement. Natural language traces enable models to reason effectively in complex environments~\citep{lampinen2022tell,wei2022chain}, and language models can reliably map textual instructions to concrete modifications~\citep{chen2021evaluating,austin2021program,saunders2022self,scheurer2023training,feng2024natural}. However, existing methods often rely on random sampling of self-generated critiques, which may be noisy or fail to capture external preferences. In contrast, we leverage external rationales as directional information, enabling guided search in the semantic space.

\section{Experiments}
\label{sec:experiments}

We evaluate Feedback Descent across three diverse domains---visual design, prompt optimization, and molecule discovery---to demonstrate its generality and effectiveness.
Our evaluation spans diverse representations and evaluation modalities:
\textbf{(1) Representation diversity:} SVG (spatial/geometric), prompts (instructional text), molecules (chemical structures).
\textbf{(2) Evaluation modality:} SVG uses vision-language model comparison, prompt optimization uses dataset-specific accuracy metrics, molecules use computational docking scores.
Together, our experiments answer the question: \emph{Can a single optimization framework, with no domain-specific engineering, match or exceed specialized methods purely through structured feedback?}

\subsection{Experimental Domains}
We describe each evaluation domain and how we obtain pairwise comparisons augmented with textual rationales.

\begin{figure}[t]
\vspace{-3mm}
\centering
\includegraphics[width=0.82\linewidth]{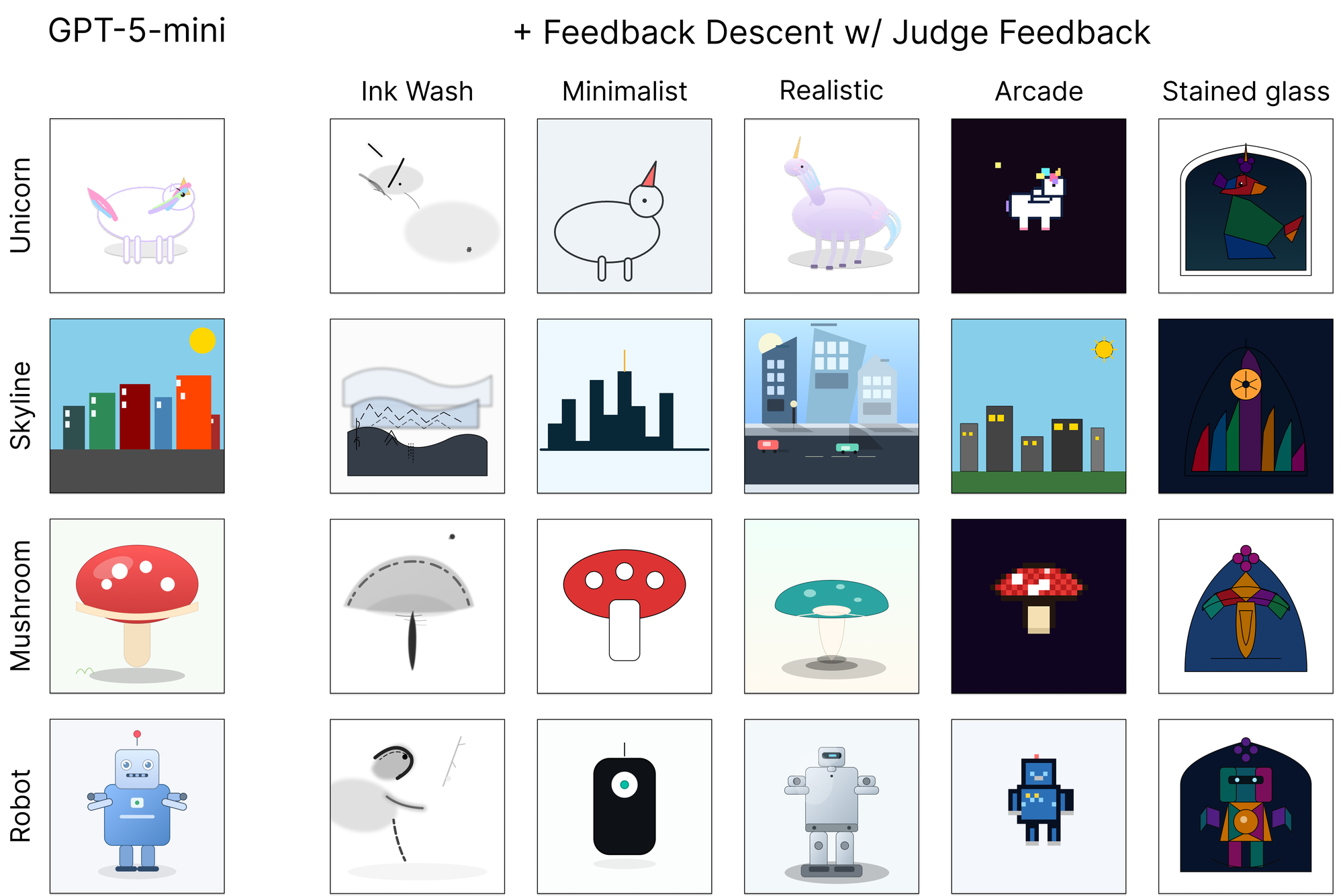}
\vspace{-2mm}
\caption{Example images generated by Feedback Descent under six different judge criteria. \textbf{Feedback Descent yields visually distinct objects aligned with the aesthetic criteria preferred by each judge.}}
\label{fig:visual_design_examples}
\end{figure}

\begin{table}[t]
\centering
\begin{tabular}{ll| *{5}{S[table-format=3.1, table-space-text-post=\%]}}
\toprule
\textbf{Setup} & \textbf{Subject} & {\textbf{Ink Wash}} & {\textbf{Minimalist}} & {\textbf{Realistic}} & {\textbf{Arcade}} & {\textbf{Stain Glass}} \\
\midrule
\multirow{4}{*}{\textbf{From Scratch}}
& City Skyline & 100.0 \% & 100.0 \% & 100.0 \% & 100.0 \% & 100.0 \% \\
& Mushroom & 100.0 \% & 100.0 \% & 83.3 \% & 100.0 \% & 100.0 \% \\
& Robot & 100.0 \% & 100.0 \% & 100.0 \% & 100.0 \% & 100.0 \% \\
& Unicorn & 100.0 \% & 100.0 \% & 100.0 \% & 100.0 \% & 100.0 \% \\
\midrule
\multirow{4}{*}{\textbf{Judge-Aware}}
& City Skyline & 50.0 \% & 100.0 \% & 100.0 \% & 100.0 \% & 88.9 \% \\
& Mushroom & 100.0 \% & 100.0 \% & 100.0 \% & 100.0 \% & 94.1 \% \\
& Robot & 100.0 \% & 100.0 \% & 100.0 \% & 93.8 \% & 100.0 \% \\
& Unicorn & 89.5 \% & 100.0 \% & 100.0 \% & 93.8 \% & 80.0 \% \\
\bottomrule
\end{tabular}
\caption{
\label{tab:visual_design}
Win rates after five iterations comparing Feedback Descent against direct prompting under two conditions: \textit{From Scratch} and \textit{Aware} of the judge rubric. Feedback Descent matches or outperforms the baseline on all combinations tested (i.e. $\geq 50\%$). \textbf{Iterative feedback consistently improves SVG designs over direct prompting.}
}
\end{table}

\textbf{SVG optimization.}
Taking inspiration from~\citet{bubeck2023sparks}, we evaluate the ability of models to output SVG code for illustrations of unicorns.
We use a set of five diverse judge prompts, each preferring a different aesthetic: \textit{ink wash} painting style, \textit{minimalist} design, \textit{realism}, \textit{retro arcade} pixel-art motifs, and \textit{stained glass} artwork.
We compare rendered SVGs using \texttt{GPT-5-mini}, which outputs both a binary preference and short textual feedback.
To mitigate order bias, we perform two judgments with swapped image orders (A-B and B-A) and declare a winner only if both judgments are consistent.
Otherwise, we try again, up to three times, and discard if no consistent winner emerges.

\textbf{Prompt optimization.}
We follow the setup of GEPA~\citep{agrawal2025gepa} across four diverse tasks: multi-hop reasoning (HotpotQA;~\citet{yang2018hotpotqa}), instruction following (IFBench;~\citet{pyatkin2025generalizing}), privacy-aware delegation (PUPA;~\citet{li2025papillon}), and retrieval-augmented verification (HoVer;~\citet{jiang2020hover}).
We evaluate on both open-source (Qwen3-8B;~\citet{yang2025qwen3}) and proprietary (GPT-4.1 mini) models.
For each task, we use the same multi-stage programs from GEPA, where the number of stages differs across datasets, and we jointly optimize the prompts for all stages using Feedback Descent.
Optimization is driven by training examples: candidate prompts are updated based on performance on the training set and textual feedback describing which constraints were satisfied or violated.
All candidate prompts are scored on validation examples, and the prompt with the highest validation accuracy rate is selected.
We report performance on held-out test examples.

\textbf{Molecule discovery.}
We evaluate on molecular docking tasks using DOCKSTRING~\citep{garciaortegon2021dockstring} docking scores and drug-likeness (QED).
DOCKSTRING provides a realistic drug discovery setting where molecules are evaluated based on their predicted binding affinity to medically relevant targets, rather than relying solely on simple physicochemical properties.
We focus on challenging optimization tasks across six protein targets: ADRB1, PGR, PPARA, PPARG, CDK2, and F2.
Following DOCKSTRING, we compute the combined score $s = -\textrm{Vina} - 10 \times (1 - \textrm{QED})$.
We represent molecules as SMILES strings~\citep{weininger1988smiles} and evaluate using DOCKSTRING's molecular docking pipeline to compute Vina scores (binding affinity).
The feedback system provides rich structured information, including RDKit molecular descriptors~\citep{landrum2006rdkit}, similarity searches against known compounds from molecular databases~\citep{liu2007bindingdb,gilson2016bindingdb,gaulton2012chembl,mendez2019chembl}, and detailed docking results.
In the system prompt, we also provide the LLM information about the protein target obtained from the UniProt database~\citep{uniprot2023}.
Together, this provides the LLM with detailed feedback on molecular properties that affect binding affinity, drug-likeness violations, and comparisons to known active compounds.

\subsection{SVG Optimization}

We evaluate iterative feedback against direct prompting across two generators, \texttt{GPT-4o-mini} and \texttt{GPT-5-mini}. The direct prompting baseline receives the full evaluation rubric and is tasked with producing a single best design. Feedback Descent instead begins with an initial set of candidates, and through 5 rounds of structured feedback and improvement, refines designs using judge comparisons that reflect aesthetic criteria. We test two initialization regimes: \textbf{Scratch}, which starts from images simply instructed to generate images of unicorns, and \textbf{Informed}, which starts from the strongest direct generations conditioned on the rubric, determined by the LLM judge.

\begin{table}[t]
\centering
\begin{tabular}{lcccccccc}
\toprule
\textbf{Method} & \multicolumn{4}{c}{\textbf{Qwen3-8B}} & \multicolumn{4}{c}{\textbf{GPT-4.1 Mini}} \\
\cmidrule(lr){2-5} \cmidrule(lr){6-9}
& \textbf{HpQA} & \textbf{IFBench} & \textbf{Hover} & \textbf{PUPA} & \textbf{HpQA} & \textbf{IFBench} & \textbf{Hover} & \textbf{PUPA} \\
\midrule
DSPy Default~\citep{khattab2024dspy} & 42.33 & 36.90 & 35.33 & 80.82 & 38.00 & 47.79 & 46.33 & 78.57 \\
MIPROv2~\citep{opsahl2024optimizing} & 55.33 & 36.22 & 47.33 & 81.55 & 58.00 & 49.15 & 48.33 & 83.37 \\
GRPO~\citep{shao2024deepseekmath} & 43.33 & 35.88 & 38.67 & 86.66 & --- & --- & --- & --- \\
GEPA~\citep{agrawal2025gepa} & \bf 62.33 & 38.61 & 52.33 & \bf 91.85 & \bf 69.00 & 52.72 & 51.67 & \bf 94.47 \\
\midrule
Feedback Descent (ours) & 60.00 & \bf 38.78 & \bf 60.00 & 90.90 & 68.33 & \bf 54.59 & \bf 57.67 & 85.66 \\
\bottomrule
\end{tabular}
\caption{
\label{tab:prompt_optimization}
Prompt optimization results across multiple benchmarks. Feedback Descent consistently outperforms or is competitive with state-of-the-art methods.}
\end{table}

\textbf{Results.} \cref{tab:visual_design} shows the win rates after 5 iterations. For both \texttt{GPT-4o-mini} and \texttt{GPT-5-mini}, Feedback Descent reliably improves outputs over the initial population.
Furthermore, qualitative examples of an optimization trajectory in~\cref{fig:visual_progression} and each judge-object pair in~\cref{fig:visual_design_examples} demonstrate that the procedure consistently produces unicorns whose visual style diverges across judges, aligning with aesthetic criteria such as geometry, minimalism, or retro arcade motifs.

\begin{takeawaybox}{Iterative feedback can elicit better outputs from the same model}
Because of a generator-verifier gap, even prompting with the exact judge rubric is suboptimal for SVG generation.
Feedback Descent elicits better images from the same generator by iteratively proposing improvements guided by feedback.
\end{takeawaybox}

\subsection{Prompt Optimization}

We compare Feedback Descent against five baselines: the default prompt implemented in the DSPy program~\citep[Default]{khattab2024dspy}, a Bayesian optimization approach for selecting instructions and demonstrations~\citep[MIPROv2]{opsahl2024optimizing}, online reinforcement learning~\citep[GRPO]{shao2024deepseekmath}, and a reflective prompt evolution method~\citep[GEPA]{agrawal2025gepa}.
All baselines are run under matched rollout budgets for fair comparison, and the reported baseline results are from~\citet{agrawal2025gepa}.

Each example produces pointwise feedback about which constraints were satisfied or violated.
To construct the pairwise feedback for Feedback Descent, we stratify the examples into quadrants based on whether each prompt resulted in a correct response.
We then ask the model to propose textual descriptions of inputs where these discrepancies arise.
We then statistically validate each hypothesis, filtering for ones that correspond to consistent differences in performance between the prompts.
This process distills the true global differences between the two prompts.

\Cref{tab:prompt_optimization} shows that Feedback Descent is competitive with GEPA across both models, achieving the best performance on IFBench and Hover, while GEPA leads on HotpotQA and PUPA.
Despite GEPA being specifically engineered for prompt optimization with coordinate descent and Pareto frontier maintenance, Feedback Descent achieves competitive performance with a simpler approach: jointly optimizing all prompts at once via automated textual summaries of pairwise performance differences.
We do not claim to present a state-of-the-art prompt optimizer; rather, these results demonstrate that our general-purpose framework remains competitive with specialized methods while requiring minimal domain-specific engineering.

\begin{takeawaybox}{Pairwise Comparisons Suffice for Competitive Prompt Optimization}
Feedback Descent is competitive with GEPA, a state-of-the-art prompt optimization method, despite employing a simpler, domain-agnostic approach.
\end{takeawaybox}

\begin{table}[t]
\centering
\begin{tabular}{ll S[table-format=2.3] S[table-format=1.3] S[table-format=1.3] S[table-format=2.3] S[table-format=1.3] S[table-format=1.3] | S[table-format=1.3]}
\toprule
& \textbf{Method} & {ADRB1} & {PGR} & {PPARA} & {PPARG} & {CDK2} & {F2} & {Avg} \\
\midrule
\multirow{6}{*}{\rotatebox[origin=c]{90}{\shortstack{DOCKSTRING \\ (N=260155)}}}
& Top $50\%$ & 5.305 & 3.478 & 4.549 & 4.210 & 4.385 & 4.168 & 4.349 \\
& Top $90\%$ & 8.785 & 7.878 & 7.987 & 7.658 & 7.733 & 7.477 & 7.920 \\
& Top $99\%$ & 9.620 & 8.703 & 8.718 & 8.449 & 8.453 & 8.139 & 8.680 \\
& Top $99.9\%$ & 10.209 & 9.260 & 9.230 & 9.012 & 8.979 & 8.722 & 9.235 \\
& Top $99.99\%$ & {\blueul{10.742}} & {\blueul{9.723}} & 9.821 & 9.518 & 9.509 & 9.252 & 9.761 \\
& Best Molecule & {\blueul{11.330}} & {\blueul{9.742}} & 9.907 & 9.529 & 9.534 & {\blueul{9.311}} & 9.892 \\
\midrule
& Graph MCTS$^\dagger$~\citep{jensen2019graph} & 8.883 & 7.819 & 7.363 & 7.134 & 7.777 & 6.310 & 7.548 \\
& SMILES GA~\citep{brown2019guacamol} & 9.334 & 8.335 & 9.052 & 8.560 & 8.268 & 7.984 & 8.589 \\
& REINVENT~\citep{olivecrona2017molecular} & 9.867 & 8.604 & 8.735 & 9.054 & 8.695 & 8.441 & 8.899 \\
& Graph GA$^\dagger$~\citep{jensen2019graph} & 10.249 & 8.793 & 9.211 & 8.769 & 8.652 & 8.900 & 9.096 \\
& GP-BO$^\dagger$~\citep{tripp2021a} & 10.552 & 9.307 & 9.680 & 9.485 & 9.067 & 8.686 & 9.463 \\
\midrule
& TextGrad~\citep{yuksekgonul2024textgrad} & 8.531 & 8.057 & 7.953 & 7.256 & 8.174 & 7.357 & 7.888 \\
& Feedback Descent (ours) & \bfseries 10.623 & \bfseries 9.615 & \bfseries 9.919 & \bfseries 10.187 & \bfseries 9.803 & \bfseries 9.300 & \bfseries 9.908 \\
& \hspace{2mm} w/ No Feedback & 6.190 & 8.619 & 8.230 & 8.633 & 8.300 & 8.793 & 8.127 \\
& \hspace{2mm} w/ Random Feedback & 6.604 & 8.385 & 8.276 & 6.780 & 8.793 & 7.993 & 7.805 \\
& \hspace{2mm} w/ Binary Only & 5.863 & 8.779 & 8.507 & 7.998 & 9.439 & 8.420 & 8.168 \\
\bottomrule
\end{tabular}
\vspace{-2mm}
\captionof{table}{
\label{tab:dockstring}
Results for molecule optimization on six protein targets. Full results with standard deviations are in~\cref{tab:dockstring_full}.
For each target, the top generative result is in \textbf{bold}, and any population in the DOCKSTRING database that exceeds the best generative result is {\blueul{underlined}}.
\textbf{Feedback Descent rivals or surpasses specialized molecular optimizers across all six targets.}
}
\vspace{-2mm}
\end{table}

\subsection{Molecule Optimization (DOCKSTRING)}

\begin{wrapfigure}{R}{0.4\linewidth}
\vspace{-3mm}
\centering
\begin{tabular}{r S[table-format=2.2] S[table-format=1.2] S[table-format=2.2]}
\toprule
\textbf{Noise} & {\textbf{ADRB1}} & {\textbf{PGR}} & {\textbf{PPARG}} \\
\midrule
None & \bfseries 10.62 & \bfseries 9.62 & \bfseries 10.19 \\
25\% & 9.28 & 9.14 & 8.16 \\
50\% & 10.21 & 8.92 & 8.75 \\
100\% & 6.60 & 8.39 & 6.78 \\
\bottomrule
\end{tabular}
\vspace{-1mm}
\captionof{table}{
\label{tab:ablation_noisy_feedback}
DOCKSTRING scores for ADRB1, PGR, and PPARG with Feedback Descent at varying feedback noise levels.
\textbf{Performance degrades gracefully with increasing noise.}
}

\vspace{-3mm}
\end{wrapfigure}
We compare against baselines implemented in the \texttt{mol\_opt} repository~\citep{gao2022molopt},
Our comparisons include a genetic algorithm~\citep[SMILES GA]{brown2019guacamol}, reinforcement learning~\citep[REINVENT]{olivecrona2017molecular}, fragment-based algorithms~\citep[Graph MCTS/GA]{jensen2019graph}, and Bayesian optimization on molecular graphs~\citep[GP-BO]{tripp2021a}.
Because fragment-based methods exploit graph-level structural priors, the most direct comparison is to the text-only baselines: SMILES-GA and REINVENT.
We also compare our approach with TextGrad~\citep{yuksekgonul2024textgrad}, a recent work that similarly utilizes an LLM to make textual updates to SMILES strings.
The key difference is that TextGrad's improvement proposals are pointwise, conditioning only on the latest molecule, whereas Feedback Descent conditions on accumulated feedback history, enabling more effective continual improvement at high iteration budgets.

\textbf{Main Results.}
\begin{wrapfigure}{R}{0.27\linewidth}
\vspace{-3mm}
\centering
\begin{tabular}{lc}
\toprule
\textbf{Target} & \textbf{Win Rate} \\
\midrule
ADRB1 & 76\% \\
CDK2 & 86\% \\
F2 & 79\% \\
PPARG & 83\% \\
\midrule
Avg & \textbf{81\%} \\
\bottomrule
\end{tabular}
\captionof{table}{
\label{tab:feedback_alignment}
True feedback vs.\ scrambled feedback win rate across 400 comparisons.
}

\vspace{-3mm}
\end{wrapfigure}
\cref{tab:dockstring} summarizes optimization outcomes across six protein targets.
Feedback Descent outperforms all baselines and achieves the strongest scores on all targets.
On multiple proteins, it matches or exceeds the 99.9th and even 99.99th percentiles of the DOCKSTRING database, including surpassing the best molecule present in the database itself ($N=260155$).
Notably, TextGrad consistently underperforms Feedback Descent across all targets; while the TextGrad paper evaluated on similar DOCKSTRING tasks for only $\sim$10 iterations, we run both methods for 1000 steps and find that TextGrad's pointwise conditioning does not scale well to high iteration budgets, confirming that accumulated feedback history is essential for sustained improvement.
These findings show that Feedback Descent, a purely text-based method, can rival or outperform specialized graph-based algorithms, despite lacking handcrafted structural priors.

\cref{fig:molecular_optimization} shows optimization trajectories for PPARG.
Feedback Descent achieves competitive trajectories relative to specialized methods, reaching high-scoring regions of chemical space with comparable or fewer oracle calls.
This pattern holds across targets, suggesting that the method generalizes rather than relying on idiosyncrasies of a single protein system.

\textbf{Analysis of discovered molecules.}
\cref{fig:pareto_frontier} illustrates the Pareto frontier between docking affinity (Vina score) and drug-likeness (QED) for PPARG. Feedback Descent recovers molecules that sit on or above the DOCKSTRING frontier, indicating that improvements in affinity are achieved without compromising drug-likeness. See \cref{fig:pareto_full} in the appendix for the full set of Pareto frontiers across all targets. These results show that feedback-guided search yields candidates that are not only potent but also balanced along multiple drug-relevant dimensions.

We also examine novelty by plotting Tversky similarity (CFP4 fingerprints) to approved DrugBank molecules against docking scores in~\cref{fig:novelty}. Across all targets, the correlations are weak or negative (Spearman $\rho$ between $-0.39$ and $0.40$), indicating that high-scoring candidates identified by Feedback Descent do not simply recycle functional groups from existing drugs, but instead explore novel regions of chemical space. For CDK2, no comparison is shown, as the target lacks any fully approved drugs in DrugBank with orthosteric binding as part of their mechanism of action, and thus does not satisfy our inclusion criteria.

\begin{figure}[t]
\vspace{-3mm}
\centering
\begin{minipage}{0.6\linewidth}
\vspace{-22mm}
\centering \includegraphics[width=\linewidth]{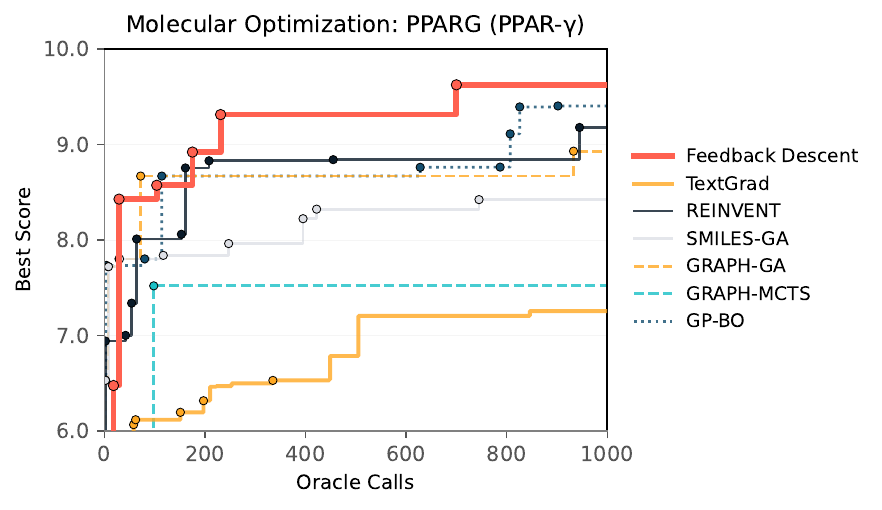}
\vspace{-6mm}
\captionof{figure}{
\label{fig:molecular_optimization}
Optimization trajectories on PPARG showing docking scores over oracle calls for Feedback Descent and specialized baselines.
\textbf{Feedback Descent quickly improves molecular docking scores within the first few hundred oracle calls.}
}
\end{minipage}
\hfill
\begin{minipage}[t]{0.37\linewidth}
\centering \includegraphics[width=\linewidth]{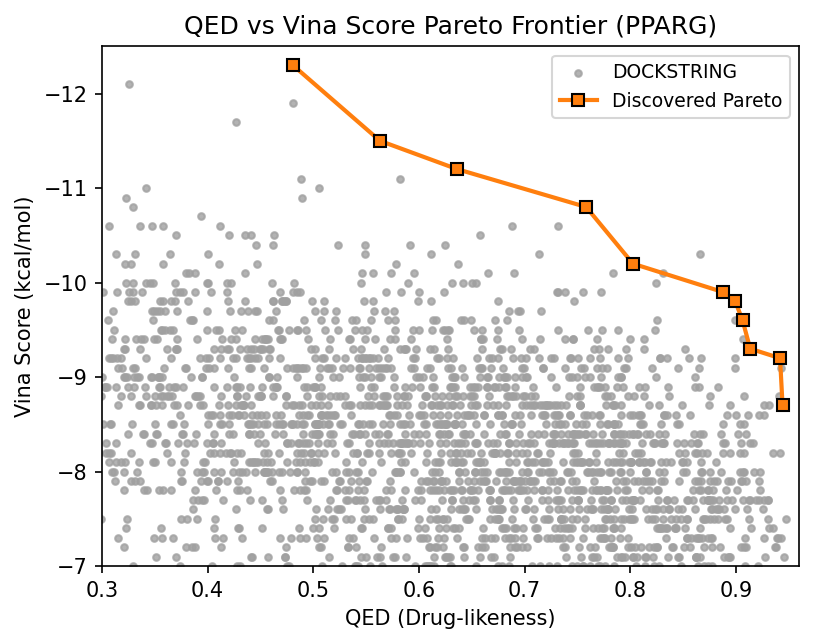}
\vspace{-4mm}
\captionof{figure}{
\label{fig:pareto_frontier}
Pareto frontier of docking affinity vs. drug-likeness, comparing Feedback Descent molecules (blue) to the DOCKSTRING database (gray).
\textbf{Feedback Descent finds novel molecules that meet or surpass known ones.}
}
\end{minipage}
\vspace{-3mm}
\end{figure}

\begin{figure}[t]
\vspace{-2mm}
\centering
\includegraphics[width=.50\linewidth]{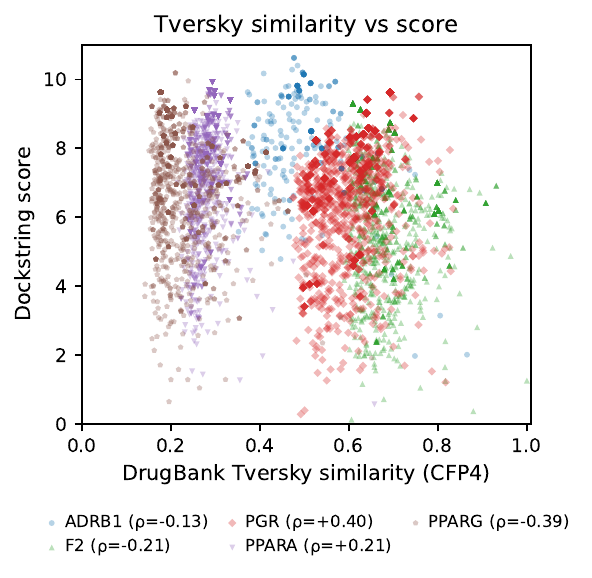}
\vspace{-2mm}
\caption{
\label{fig:novelty}
Scatter plots of Tversky similarity to approved drugs against docking scores, showing weak or negative correlations across targets. \textbf{High-scoring molecules discovered by Feedback Descent are far from any known drugs.}
}
\end{figure}

\textbf{Feedback quality and type.}
In addition to the main points of comparison, we show three ablations on the quality of the feedback in~\cref{tab:dockstring}.
We evaluate (1) \textbf{No Feedback}, i.e., parallel best-of-N sampling with no feedback; (2) \textbf{Random Feedback}, which uses the same iterative algorithm but shuffles rationales between molecule pairs; and (3) \textbf{Binary Only}, which provides only the binary signal of whether the newest molecule is better than the previous candidate.
Random feedback underperforms even best-of-N, confirming that the method relies on feedback content rather than just the iterative structure.
The gap between Binary Only and Feedback Descent (1.74 docking score units) demonstrates that textual rationales provide substantial value beyond the binary preference signal.
\Cref{tab:ablation_noisy_feedback} further tests robustness to corrupted feedback by randomly shuffling rationales at varying noise levels.
The method degrades gracefully: 50\% noise still achieves scores above the 99.9th percentile of DOCKSTRING for ADRB1 and PGR.

\textbf{Feedback alignment.}
Finally, we test whether generated molecules actually reflect the feedback shown during optimization in~\cref{tab:feedback_alignment}.
We present an LLM judge with a newly generated molecule and ask it to determine whether the molecule more closely follows the true feedback or a scrambled control.
We randomly flip the order shown to the judge to account for order bias.
Across 400 comparisons, true feedback wins 81\% of head-to-head comparisons ($p < 10^{-10}$, binomial test), demonstrating that the generator reliably reads and incorporates structured property feedback.

\begin{takeawaybox}{Feedback Descent Can Discover Novel Targeted Molecules}
Feedback Descent, operating in a purely textual form, consistently identifies novel molecules that surpass high-percentile baselines in DOCKSTRING. This demonstrates that iterative, feedback-guided optimization can enable models to genuinely explore unknown design spaces beyond their training distribution.
\end{takeawaybox}

\section{Discussion}

This paper presents Feedback Descent, an inference-time framework that improves text artifacts through structured pairwise feedback. We validate it on visual design, prompt optimization, and molecule discovery, showing that text can serve as an optimizable medium, not just static data. Unlike parameter tuning, this approach can leverage richer textual signals, allowing for continual improvement without requiring retraining.

\textbf{Limitations}. The method relies on strong evaluators, which may be scarce in some domains. Training models to produce reliable feedback remains a prerequisite for harder tasks. For creative domains, strictly “following the gradient” may be limiting; balancing refinement with exploration is an important next step.

\newpage
\subsubsection*{Acknowledgments}
We would like to thank Wanqiao Xu, Allen Nie, Henrik Marklund, Mert Yuksekgonul, Anikait Singh, and Jubayer Ibn Hamid for helpful discussions and feedback.
This work is supported by the OpenAI Superalignment Fellowship, the KFAS Fellowship, and the Schmidt Sciences AI2050 program.

\bibliography{citations}

@article{agarwal2011information,
    title     = {Information-theoretic lower bounds on the oracle complexity of stochastic convex optimization},
    author    = {Agarwal, Alekh and Bartlett, Peter L and Ravikumar, Pradeep and Wainwright, Martin J},
    journal   = {IEEE Transactions on Information Theory},
    volume    = {58},
    number    = {5},
    pages     = {3235--3249},
    year      = {2012},
    publisher = {IEEE}
}

@article{agarwal2025toolrm,
    title   = {ToolRM: Outcome Reward Models for Tool-Calling Large Language Models},
    author  = {Agarwal, Mayank and Abdelaziz, Ibrahim and Basu, Kinjal and Unuvar, Merve and Lastras, Luis A and Rizk, Yara and Kapanipathi, Pavan},
    journal = {arXiv preprint arXiv:2509.11963},
    year    = {2025}
}

@article{agrawal2025gepa,
    title   = {Gepa: Reflective prompt evolution can outperform reinforcement learning},
    author  = {Agrawal, Lakshya A and Tan, Shangyin and Soylu, Dilara and Ziems, Noah and Khare, Rishi and Opsahl-Ong, Krista and Singhvi, Arnav and Shandilya, Herumb and Ryan, Michael J and Jiang, Meng and others},
    journal = {arXiv preprint arXiv:2507.19457},
    year    = {2025}
}

@article{alphageometry2025,
    title   = {Gold-medalist Performance in Solving Olympiad Geometry with AlphaGeometry2},
    author={Chervonyi, Yuri and Trinh, Trieu H and Ol{\v{s}}{\'a}k, Miroslav and Yang, Xiaomeng and Nguyen, Hoang and Menegali, Marcelo and Jung, Junehyuk and Verma, Vikas and Le, Quoc V and Luong, Thang},
    journal = {arXiv preprint arXiv:2502.03544},
    year    = {2025},
    url     = {https://arxiv.org/abs/2502.03544}
}

@article{austin2021program,
    title   = {Program Synthesis with Large Language Models},
    author  = {Austin, Jacob and Odena, Augustus and Nye, Maxwell I. and Bosma, Maarten and Michalewski, Henryk and Dohan, David and Jiang, Ellen and Cai, Carrie J. and Terry, Michael and Le, Quoc V. and Sutton, Charles},
    journal = {arXiv preprint arXiv:2108.07732},
    year    = {2021}
}

@article{azar2023general,
    title   = {A General Theoretical Paradigm to Understand Learning from Human Preferences},
    author  = {Azar, Mohammad Gheshlaghi and Rowland, Mark and Piot, Bilal and Guo, Daniel and Calandriello, Daniele and Valko, Michal and Munos, Remi},
    journal = {arXiv preprint arXiv:2310.12036},
    year    = {2023}
}

@article{Bai2022Constitutional,
    title   = {Constitutional AI: Harmlessness from AI Feedback},
    author  = {Bai, Yuntao and Kadavath, Saurav and Kundu, Sandipan and Askell, Amanda and Kernion, Jackson and Jones, Andy and Chen, Anna and Goldie, Anna and Mirhoseini, Azalia and McKinnon, Cameron and others},
    journal = {arXiv preprint arXiv:2212.08073},
    year    = {2022}
}

@article{brown2019guacamol,
    title     = {GuacaMol: Benchmarking Models for de Novo Molecular Design},
    author    = {Brown, Nathan and Fiscato, Marco and Segler, Marwin HS and Vaucher, Alain C},
    journal   = {Journal of Chemical Information and Modeling},
    volume    = {59},
    number    = {3},
    pages     = {1096--1108},
    year      = {2019},
    publisher = {ACS Publications},
    doi       = {10.1021/acs.jcim.8b00839}
}

@article{muennighoff2025s1,
    title   = {s1: Simple test-time scaling},
    author={Muennighoff, Niklas and Yang, Zitong and Shi, Weijia and Li, Xiang Lisa and Fei-Fei, Li and Hajishirzi, Hannaneh and Zettlemoyer, Luke and Liang, Percy and Cand{\`e}s, Emmanuel and Hashimoto, Tatsunori B},
    journal = {arXiv preprint arXiv:2501.19393},
    year    = {2025}
}

@article{bubeck2023sparks,
    title   = {Sparks of artificial general intelligence: Early experiments with gpt-4},
    author  = {Bubeck, S{\'e}bastien and Chandrasekaran, Varun and Eldan, Ronen and Gehrke, Johannes and Horvitz, Eric and Kamar, Ece and Lee, Peter and Lee, Yin Tat and Li, Yuanzhi and Lundberg, Scott and others},
    journal = {arXiv preprint arXiv:2303.12712},
    year    = {2023}
}

@article{chen2019zoadamm,
    title         = {ZO-AdaMM: Zeroth-Order Adaptive Momentum Method for Black-Box Optimization},
    author        = {Chen, Xiangyi and Liu, Sijia and Xu, Kaidi and Li, Xingguo and Lin, Xue and Hong, Mingyi and Cox, David},
    year          = {2019},
    eprint        = {1910.06513},
    archiveprefix = {arXiv},
    primaryclass  = {cs.LG}
}

@article{chen2021evaluating,
    title   = {Evaluating Large Language Models Trained on Code},
    author  = {Chen, Mark and Tworek, Jerry and Jun, Heewoo and Yuan, Qiming and Pinto, Henrique Ponde de Oliveira and Kaplan, Jared and Edwards, Harri and Burda, Yura and Joseph, Nicholas and Brockman, Greg and Ray, Alex and Puigdomenech, Raul and Radford, Alec and Sastry, Vedant and Sutskever, Ilya and Ziegler, Daniel M. and Dennison, Amanda and Ervin, Marius and Perez, William and Karaa, Sallaheddine and Kluska, Sarah and Lespiau, Jerome and Brown, Tom B. and Wu, David},
    journal = {arXiv preprint arXiv:2107.03374},
    year    = {2021}
}

@article{christiano2017deep,
    title   = {Deep Reinforcement Learning from Human Preferences},
    author  = {Christiano, Paul F and Leike, Jan and Brown, Tom and Martic, Miljan and Legg, Shane and Amodei, Dario},
    journal = {Advances in Neural Information Processing Systems},
    volume  = {30},
    year    = {2017}
}

@article{chung2022scaling,
    title   = {Scaling Instruction-Finetuned Language Models},
    author  = {Hyung Won Chung and Le Hou and Shayne Longpre and Barret Zoph and Yi Tay and William Fedus and Eric Li and Xuezhe Wang and Mostafa Dehghani and Siddhartha Brahma and Albert Webson and Shixiang Shane Gu and Zhuyun Dai and Mirac Suzgun and Xinyun Chen and Aakanksha Chowdhery and Sharan Narang and Gaurav Mishra and Adams Yu and Vincent Zhao and Yanping Huang and Andrew Dai and Hongkun Yu and Slav Petrov and Ed H. Chi and Jeff Dean and Jacob Devlin and Adam Roberts and Denny Zhou and Quoc V. Le and Jason Wei},
    journal = {arXiv preprint arXiv:2210.11416},
    year    = {2022}
}

@article{cobbe2021training,
    title   = {Training Verifiers to Solve Math Word Problems},
    author  = {Cobbe, Karl and Kosaraju, Vineet and Bavarian, Mohammad and Chen, Mark and Jun, Heewoo and Kaiser, Lukasz and Plappert, Matthias and Tworek, Jerry and Hilton, Jacob and Nakano, Reiichiro and Hesse, Christopher and Schulman, John},
    journal = {arXiv preprint arXiv:2110.14168},
    year    = {2021}
}

@article{codegemma2024,
    title   = {CodeGemma: Open Code Models Based on Gemma},
    author  = {{CodeGemma Team} and Zhao, Heri and others},
    journal = {arXiv preprint arXiv:2406.11409},
    year    = {2024}
}

@article{deepseek2025r1,
    title   = {DeepSeek-R1: Incentivizing Reasoning Capability in LLMs via Reinforcement Learning},
    author  = {{DeepSeek-AI} and Guo, Daya and Yang, Dejian and Zhang, Haowei and Song, Junxiao and Zhang, Ruoyu and Xu, Runxin and Zhu, Qihao and Ma, Shirong and Wang, Peiyi and others},
    journal = {arXiv preprint arXiv:2501.12948},
    year    = {2025},
    url     = {https://arxiv.org/abs/2501.12948}
}

@article{du2022read,
    title   = {Read, Revise, Repeat: A System Demonstration for Human-in-the-loop Iterative Text Revision},
    author  = {Wanyu Du and Vipul Raheja and Dhruv Kumar and Zae Myung Kim and Melissa Lopez and Dongyeop Kang},
    journal = {In2Writing},
    year    = {2022}
}

@article{ethayarajh2024kto,
    title   = {KTO: Model Alignment as Prospect Theoretic Optimization},
    author  = {Ethayarajh, Kawin and Xu, Winnie and Muennighoff, Niklas and Jurafsky, Dan and Kiela, Douwe},
    journal = {arXiv preprint arXiv:2402.01306},
    year    = {2024}
}

@article{gao2022molopt,
    title   = {Sample Efficiency Matters: A Benchmark for Practical Molecular Optimization},
    author  = {Gao, Wenhao and Fu, Tianfan and Sun, Jimeng and Coley, Connor W},
    journal = {arXiv preprint arXiv:2206.12411},
    year    = {2022}
}

@article{garciaortegon2021dockstring,
    title   = {DOCKSTRING: Easy Molecular Docking Yields Better Benchmarks for Ligand Design},
    author  = {Garc{\'\i}a-Orteg{\'o}n, Miguel and Simm, Gregor N. C. and Tripp, Austin J. and Hern{\'a}ndez-Lobato, Jos{\'e} Miguel and Bauer, Matthias R. and Bacallado, Sergio},
    journal = {Journal of Chemical Information and Modeling},
    volume  = {62},
    number  = {15},
    pages   = {3486--3502},
    year    = {2022},
    doi     = {10.1021/acs.jcim.1c01334}
}

@article{gaulton2012chembl,
    title   = {ChEMBL: a large-scale bioactivity database for drug discovery},
    author  = {Gaulton, Anna and Bellis, Louisa J and Bento, A Patricia and Chambers, Jon and Davies, Mark and Hersey, Anne and Light, Yvonne and McGlinchey, Shaun and Michalovich, David and Al-Lazikani, Bissan and Overington, John P},
    journal = {Nucleic Acids Research},
    volume  = {40},
    number  = {D1},
    pages   = {D1100--D1107},
    year    = {2012},
    doi     = {10.1093/nar/gkr777}
}

@article{geiping2025scaling,
    title   = {Scaling up test-time compute with latent reasoning: A recurrent depth approach},
    author  = {Geiping, Jonas and McLeish, Sean and Jain, Neel and Kirchenbauer, John and Singh, Siddharth and Bartoldson, Brian R and Kailkhura, Bhavya and Bhatele, Abhinav and Goldstein, Tom},
    journal = {arXiv preprint arXiv:2502.05171},
    year    = {2025}
}

@article{gilson2016bindingdb,
    title   = {BindingDB in 2015: a public database for medicinal chemistry, computational chemistry and systems pharmacology},
    author  = {Gilson, Michael K and Liu, Tiqing and Baitaluk, Michael and Nicola, George and Hwang, Linda and Chong, Justin},
    journal = {Nucleic Acids Research},
    volume  = {44},
    number  = {D1},
    pages   = {D1045--D1053},
    year    = {2016},
    doi     = {10.1093/nar/gkv1072}
}

@article{guo2024deepseek,
    title   = {DeepSeek-Coder: When the Large Language Model Meets Programming -- The Rise of Code Intelligence},
    author={Guo, Daya and Zhu, Qihao and Yang, Dejian and Xie, Zhenda and Dong, Kai and Zhang, Wentao and Chen, Guanting and Bi, Xiao and Wu, Yu and Li, YK and others},
    journal = {arXiv preprint arXiv:2401.14196},
    year    = {2024}
}

@inproceedings{li2025papillon,
    title     = {PAPILLON: PrivAcy Preservation from Internet-based and Local Language MOdel ENsembles},
    author    = {Li, Siyan and Raghuram, Vethavikashini Chithrra and Khattab, Omar and Hirschberg, Julia and Yu, Zhou},
    booktitle = {Proceedings of the 2025 Conference of the North American Chapter of the Association for Computational Linguistics},
    year      = {2025},
    note      = {NAACL 2025}
}

@article{jensen2019graph,
    title     = {A Graph-Based Genetic Algorithm and Generative Model/Monte Carlo Tree Search for the Exploration of Chemical Space},
    author    = {Jensen, Jan H},
    journal   = {Chemical Science},
    volume    = {10},
    number    = {12},
    pages     = {3567--3572},
    year      = {2019},
    publisher = {Royal Society of Chemistry},
    doi       = {10.1039/C8SC05372C}
}

@article{khattab2022dspy,
    title   = {Demonstrate-Search-Predict: Composing retrieval and language models for knowledge-intensive NLP},
    author={Khattab, Omar and Santhanam, Keshav and Li, Xiang Lisa and Hall, David and Liang, Percy and Potts, Christopher and Zaharia, Matei},
    journal = {arXiv preprint arXiv:2212.14024},
    year    = {2022}
}

@article{khattab2024dspy,
    title   = {DSPy: Compiling Declarative Language Model Calls into Self-Improving Pipelines},
    author  = {Khattab, Omar and Singhvi, Arnav and Maheshwari, Paridhi and Zhang, Zhiyuan and Santhanam, Keshav and Vardhamanan, Sri and Haq, Saiful and Sharma, Ashutosh and Joshi, Thomas T and Moazam, Hanna and Miller, Heather and Zaharia, Matei and Potts, Christopher},
    journal = {arXiv preprint arXiv:2310.03714},
    year    = {2024}
}

@article{mishra2023help,
  title   = {Help me think: A Simple Prompting Strategy for Non-experts to Create Customized Content with Models},
  author={Mishra, Swaroop and Nouri, Elnaz},
  booktitle={Findings of the Association for Computational Linguistics: ACL 2023},
  pages={11834--11890},
  year={2023}
}

@inproceedings{lampinen2022tell,
    title     = {Tell me why! {E}xplanations support learning relational and causal structure},
    author    = {Lampinen, Andrew K and Roy, Nicholas and Dasgupta, Ishita and Chan, Stephanie Cy and Tam, Allison and Mcclelland, James and Yan, Chen and Santoro, Adam and Rabinowitz, Neil C and Wang, Jane and Hill, Felix},
    booktitle = {Proceedings of the 39th International Conference on Machine Learning},
    pages     = {11868--11890},
    year      = {2022},
    volume    = {162},
    series    = {Proceedings of Machine Learning Research},
    publisher = {PMLR}
}

@misc{landrum2006rdkit,
    title  = {RDKit: Open-source cheminformatics},
    author = {Landrum, Greg},
    year   = {2006},
    url    = {http://www.rdkit.org}
}

@article{liu2007bindingdb,
    title   = {BindingDB: a web-accessible database of experimentally determined protein--ligand binding affinities},
    author  = {Liu, Tiqing and Lin, Yuhmei and Wen, Xin and Jorissen, Robert N and Gilson, Michael K},
    journal = {Nucleic Acids Research},
    volume  = {35},
    number  = {suppl\_1},
    pages   = {D198--D201},
    year    = {2007},
    doi     = {10.1093/nar/gkl999}
}

@article{longpre2023flan,
    title   = {The Flan Collection: Designing Data and Methods for Effective Instruction Tuning},
    author  = {Shayne Longpre and Le Hou and Tu Vu and Albert Webson and Hyung Won Chung and Yi Tay and Denny Zhou and Quoc V. Le and Barret Zoph and Jason Wei and Adam Roberts},
    journal = {ICML},
    year    = {2023}
}

@article{lozhkov2024starcoder2,
    title   = {StarCoder 2 and The Stack v2: The Next Generation},
    author  = {Lozhkov, Anton and Li, Raymond and Allal, Loubna Ben and Kocetkov, Denis and Mou, Chenghao and Akiki, Christopher and Ferrandis, Carlos Muñoz and Niklas, Muennighoff and Kaddour, Jean and Jernite, Yacine and others},
    journal = {arXiv preprint arXiv:2402.19173},
    year    = {2024}
}

@article{madaan2023selfrefine,
    title   = {Self-Refine: Iterative Refinement with Self-Feedback},
    author  = {Madaan, Aman and Tandon, Niket and Gupta, Prakhar and Hallinan, Skyler and Gao, Luyu and Wiegreffe, Sarah and Alon, Uri and Dziri, Nouha and Prabhumoye, Shrimai and Yang, Yiming and Welleck, Sean and Majumder, Bodhisattwa Prasad and Gupta, Shashank and Yazdanbakhsh, Amir and Clark, Peter},
    journal = {arXiv preprint arXiv:2303.17651},
    year    = {2023}
}

@article{mendez2019chembl,
    title   = {ChEMBL: towards direct deposition of bioassay data},
    author  = {Mendez, David and Gaulton, Anna and Bento, A Patr{\'\i}cia and Chambers, Jon and De Veij, Marleen and F{\'e}lix, Eloy and Magari{\~n}os, Mar{\'\i}a Paula and Mosquera, Jos{\'e} F and Mutowo, Prudence and Nowotka, Micha{\l} and others},
    journal = {Nucleic Acids Research},
    volume  = {47},
    number  = {D1},
    pages   = {D930--D940},
    year    = {2019},
    doi     = {10.1093/nar/gky1075}
}

@inproceedings{munos2024nash,
    title     = {{N}ash Learning from Human Feedback},
    author    = {Munos, Remi and Valko, Michal and Calandriello, Daniele and Gheshlaghi Azar, Mohammad and Rowland, Mark and Guo, Zhaohan Daniel and Tang, Yunhao and Geist, Matthieu and Mesnard, Thomas and Fiegel, C{\^o}me and Michi, Andrea and Selvi, Marco and Girgin, Sertan and Momchev, Nikola and Bachem, Olivier and Mankowitz, Daniel J and Precup, Doina and Piot, Bilal},
    booktitle = {Proceedings of the 41st International Conference on Machine Learning},
    pages     = {36743--36768},
    year      = {2024},
    volume    = {235},
    series    = {Proceedings of Machine Learning Research},
    publisher = {PMLR}
}

@book{nemirovski1983problem,
    title     = {Problem Complexity and Method Efficiency in Optimization},
    author    = {Nemirovski, Arkadi S and Yudin, David Borisovich},
    year      = {1983},
    publisher = {Wiley-Interscience},
    address   = {New York}
}

@article{nesterov2017random,
    title   = {Random Gradient-Free Minimization of Convex Functions},
    author  = {Nesterov, Yurii and Spokoiny, Vladimir},
    journal = {Foundations of Computational Mathematics},
    volume  = {17},
    number  = {2},
    pages   = {527--566},
    year    = {2017},
    doi     = {10.1007/s10208-015-9296-2}
}

@article{nie2024importance,
    title   = {The importance of directional feedback for llm-based optimizers},
    author  = {Nie, Allen and Cheng, Ching-An and Kolobov, Andrey and Swaminathan, Adith},
    journal = {arXiv preprint arXiv:2405.16434},
    year    = {2024}
}

@article{nijkamp2022codegen,
    title   = {CodeGen: An Open Large Language Model for Code with Multi-Turn Program Synthesis},
    author  = {Erik Nijkamp and Bo Pang and Hiroaki Hayashi and Lifu Tu and Huan Wang and Yingbo Zhou and Silvio Savarese and Caiming Xiong},
    journal = {arXiv preprint arXiv:2203.13474},
    year    = {2022}
}

@article{olivecrona2017molecular,
    title     = {Molecular De Novo Design through Deep Reinforcement Learning},
    author    = {Olivecrona, Marcus and Blaschke, Thomas and Engkvist, Ola and Chen, Hongming},
    journal   = {Journal of Cheminformatics},
    volume    = {9},
    number    = {1},
    pages     = {1--14},
    year      = {2017},
    publisher = {BioMed Central},
    doi       = {10.1186/s13321-017-0235-x}
}

@techreport{openai2024o1,
    title       = {OpenAI o1 System Card},
    author      = {{OpenAI}},
    year        = {2024},
    institution = {OpenAI},
    url         = {https://arxiv.org/abs/2412.16720}
}

@article{opsahl2024optimizing,
    title   = {Optimizing instructions and demonstrations for multi-stage language model programs},
    author  = {Opsahl-Ong, Krista and Ryan, Michael J and Purtell, Josh and Broman, David and Potts, Christopher and Zaharia, Matei and Khattab, Omar},
    journal = {arXiv preprint arXiv:2406.11695},
    year    = {2024}
}

@article{ouyang2022training,
    title   = {Training Language Models to Follow Instructions with Human Feedback},
    author  = {Ouyang, Long and Wu, Jeffrey and Jiang, Xu and Almeida, Diogo and Wainwright, Carroll and Mishkin, Pamela and Zhang, Chong and Agarwal, Sandhini and Slama, Katarina and Ray, Alex and Schulman, John and Hilton, Jacob and Kelton, Fraser and Miller, Luke and Simens, Maddie and Askell, Amanda and Welinder, Peter and Christiano, Paul F and Leike, Jan and Lowe, Ryan},
    journal = {Advances in Neural Information Processing Systems},
    volume  = {35},
    pages   = {27730--27744},
    year    = {2022}
}

@article{parisi2022talm,
    title   = {TALM: Tool Augmented Language Models},
    author  = {Aaron Parisi and Yao Zhao and Noah Fiedel},
    journal = {arXiv preprint arXiv:2205.12255},
    year    = {2022}
}

@article{pryzant2023protegi,
    title   = {Automatic Prompt Optimization with ``Gradient Descent'' and Beam Search},
    author  = {Pryzant, Reid and Iter, Dan and Li, Jerry and Lee, Yin Tat and Zhu, Chenguang and Zeng, Michael},
    journal = {arXiv preprint arXiv:2305.03495},
    year    = {2023}
}

@article{pyatkin2025generalizing,
    title   = {Generalizing Verifiable Instruction Following},
    author  = {Pyatkin, Valentina and Malik, Saumya and Graf, Victoria and Ivison, Hamish and Huang, Shengyi and Dasigi, Pradeep and Lambert, Nathan and Hajishirzi, Hannaneh},
    journal = {arXiv preprint arXiv:2507.02833},
    year    = {2025}
}

@article{qin2023tool,
    title   = {Tool Learning with Foundation Models},
    author  = {Yujia Qin and Shengding Hu and Yankai Lin and Weize Chen and Ning Ding and Ganqu Cui and Zheni Zeng and Yufei Huang and Chaojun Xiao and Chi Han and Yi Ren Fung and Yusheng Su and Huadong Wang and Cheng Qian and Runchu Tian and Kunlun Zhu and Shihao Liang and Xingyu Shen and Bokai Xu and Zhen Zhang and Yining Ye and Bowen Li and Ziwei Tang and Jing Yi and Yuzhang Zhu and Zhenning Dai and Lan Yan and Xin Cong and Yaxi Lu and Weilin Zhao and Yuxiang Huang and Junxi Yan and Xu Han and Xian Sun and Dahai Li and Jason Phang and Cheng Yang and Tongshuang Wu and Heng Ji and Zhiyuan Liu and Maosong Sun},
    journal = {arXiv preprint arXiv:2304.08354},
    year    = {2023}
}

@article{rafailov2023direct,
    title   = {Direct Preference Optimization: Your Language Model is Secretly a Reward Model},
    author  = {Rafailov, Rafael and Sharma, Archit and Mitchell, Eric and Ermon, Stefano and Manning, Christopher D and Finn, Chelsea},
    journal = {arXiv preprint arXiv:2305.18290},
    year    = {2023}
}

@article{roziere2023code,
    title   = {Code Llama: Open Foundation Models for Code},
    author  = {Baptiste Roziere and Jonas Gehring and Fabian Gloeckle and Sten Sootla and Itai Gat and Xiaoqing Ellen Tan and Yossi Adi and Jingyu Liu and Tal Remez and Jérémy Rapin and Artyom Kozhevnikov and Ivan Evtimov and Joanna Bitton and Manish Bhatt and Cristian Canton Ferrer and Aaron Grattafiori and Wenhan Xiong and Alexandre Défossez and Jade Copet and Faisal Azhar and Hugo Touvron and Louis Martin and Nicolas Usunier and Thomas Scialom and Gabriel Synnaeve},
    journal = {arXiv preprint arXiv:2308.12950},
    year    = {2023}
}

@article{saunders2022self,
    title   = {Self-critiquing models for assisting human evaluators},
    author  = {Saunders, William and Yeh, Catherine and Wu, Jeff and Bills, Steven and Ouyang, Long and Ward, Jonathan and Leike, Jan},
    journal = {arXiv preprint arXiv:2206.05802},
    year    = {2022}
}

@article{scheurer2023training,
    title   = {Training Language Models with Language Feedback at Scale},
    author  = {Scheurer, Jérémy and Campos, Jon Ander and Chan, Jun Shern and Chen, Angelica and Cho, Kyunghyun and Perez, Ethan},
    journal = {arXiv preprint arXiv:2303.16755},
    year    = {2023}
}

@article{schick2022peer,
    title   = {PEER: A Collaborative Language Model},
    author={Schick, Timo and Dwivedi-Yu, Jane and Jiang, Zhengbao and Petroni, Fabio and Lewis, Patrick and Izacard, Gautier and You, Qingfei and Nalmpantis, Christoforos and Grave, Edouard and Riedel, Sebastian},
    journal = {arXiv preprint arXiv:2208.11663},
    year    = {2022}
}

@article{schick2023toolformer,
    title   = {Toolformer: Language Models Can Teach Themselves to Use Tools},
    author  = {Timo Schick and Jane Dwivedi-Yu and Roberto Dessì and Roberta Raileanu and Maria Lomeli and Luke Zettlemoyer and Nicola Cancedda and Thomas Scialom},
    journal = {NeurIPS},
    year    = {2023}
}

@inproceedings{jiang2020hover,
    title     = {HoVer: A Dataset for Many-Hop Fact Extraction And Claim Verification},
    author    = {Jiang, Yichen and Bordia, Shikha and Zhong, Zheng and Dognin, Charles and Singh, Maneesh and Bansal, Mohit},
    booktitle = {Findings of the Association for Computational Linguistics: EMNLP 2020},
    pages     = {3441--3460},
    year      = {2020},
    doi       = {10.18653/v1/2020.findings-emnlp.309}
}

@article{shao2024deepseekmath,
    title   = {Deepseekmath: Pushing the limits of mathematical reasoning in open language models},
    author  = {Shao, Zhihong and Wang, Peiyi and Zhu, Qihao and Xu, Runxin and Song, Junxiao and Bi, Xiao and Zhang, Haowei and Zhang, Mingchuan and Li, YK and Wu, Yang and others},
    journal = {arXiv preprint arXiv:2402.03300},
    year    = {2024}
}

@incollection{silver2025era,
    title     = {Welcome to the Era of Experience},
    author    = {Silver, David and Sutton, Richard S.},
    booktitle = {Designing an Intelligence},
    publisher = {MIT Press},
    year      = {2025},
    note      = {Preprint}
}

@article{snell2024scaling,
    title   = {Scaling LLM Test-Time Compute Optimally can be More Effective than Scaling Model Parameters},
    author  = {Snell, Charlie and Lee, Jaehoon and Xu, Kelvin and Kumar, Aviral},
    journal = {arXiv preprint arXiv:2408.03314},
    year    = {2024}
}

@article{sun2024bbtv2,
    title   = {BBTv2: Towards a Gradient-Free Future with Large Language Models},
    author  = {Sun, Tianxiang and Chen, Zhengfu and Qiu, Xipeng and Huang, Xuanjing},
    journal = {arXiv preprint arXiv:2205.11200},
    year    = {2022}
}

@inproceedings{tripp2021a,
    title     = {A Fresh Look at De Novo Molecular Design Benchmarks},
    author    = {Austin Tripp and Gregor N. C. Simm and Jos{\'e} Miguel Hern{\'a}ndez-Lobato},
    booktitle = {NeurIPS 2021 AI for Science Workshop},
    year      = {2021},
    url       = {https://openreview.net/forum?id=gS3XMun4cl_}
}

@article{uniprot2023,
    title   = {UniProt: the Universal Protein Knowledgebase in 2023},
    author  = {{The UniProt Consortium}},
    journal = {Nucleic Acids Research},
    volume  = {51},
    number  = {D1},
    pages   = {D523--D531},
    year    = {2023},
    doi     = {10.1093/nar/gkac1052}
}

@article{wang2023codet5+,
    title   = {CodeT5+: Open Code Large Language Models for Code Understanding and Generation},
    author  = {Yue Wang and Hung Le and Akhilesh Deepak Gotmare and Nghi D.Q. Bui and Junnan Li and Steven C.H. Hoi},
    journal = {Proceedings of EMNLP},
    year    = {2023}
}

@article{wang2024adaptive,
    title   = {Adaptive Preference Scaling for Reinforcement Learning with Human Feedback},
    author  = {Wang, Jiayi and Sun, Yuxuan and Zhang, Wenjia and others},
    journal = {arXiv preprint arXiv:2406.02764},
    year    = {2024}
}

@article{wang2024llm,
    title   = {A Survey on Large Language Model based Autonomous Agents},
    author  = {Wang, Lei and Ma, Chen and Feng, Xueyang and Zhang, Zeyu and Yang, Hao and Zhang, Jingsen and Chen, Zhiyuan and Tang, Jiakai and Chen, Xu and Lin, Yankai and others},
    journal = {arXiv preprint arXiv:2308.11432},
    year    = {2023}
}

@inproceedings{wei2022chain,
    title     = {Chain-of-Thought Prompting Elicits Reasoning in Large Language Models},
    author    = {Wei, Jason and Wang, Xuezhi and Schuurmans, Dale and Bosma, Maarten and Ichter, Brian and Xia, Fei and Chi, Ed H. and Le, Quoc V. and Zhou, Denny},
    booktitle = {NeurIPS},
    year      = {2022}
}

@article{wei2022finetuned,
    title   = {Finetuned Language Models Are Zero-Shot Learners},
    author  = {Jason Wei and Maarten Bosma and Vincent Y. Zhao and Kelvin Guu and Adams Wei Yu and Brian Lester and Nan Du and Andrew M. Dai and Quoc V. Le},
    journal = {ICLR},
    year    = {2022}
}

@article{weininger1988smiles,
    title   = {SMILES, a chemical language and information system. 1. Introduction to methodology and encoding rules},
    author  = {Weininger, David},
    journal = {Journal of Chemical Information and Computer Sciences},
    volume  = {28},
    number  = {1},
    pages   = {31--36},
    year    = {1988},
    doi     = {10.1021/ci00057a005}
}

@article{welleck2023generating,
    title   = {Generating Sequences by Learning to Self-Correct},
    author={Welleck, Sean and Lu, Ximing and West, Peter and Brahman, Faeze and Shen, Tianxiao and Khashabi, Daniel and Choi, Yejin},
    journal = {ICLR},
    year    = {2023}
}

@article{wirth2017survey,
    title   = {A Survey of Preference-Based Reinforcement Learning Methods},
    author  = {Wirth, Christian and Akrour, Riad and Neumann, Gerhard and F{\"u}rnkranz, Johannes},
    journal = {Journal of Machine Learning Research},
    volume  = {18},
    number  = {136},
    pages   = {1--46},
    year    = {2017}
}

@article{wu2023finegrained,
    title   = {Fine-Grained Human Feedback Gives Better Rewards for Language Model Training},
    author  = {Wu, Zeqiu and Hu, Yushi and Shi, Weijia and Dziri, Nouha and Suhr, Alane and Ammanabrolu, Prithviraj and Smith, Noah A. and Ostendorf, Mari and Hajishirzi, Hannaneh},
    journal = {arXiv preprint arXiv:2306.01693},
    year    = {2023}
}

@article{just2024datacentric,
    title   = {Data-Centric Human Preference Optimization with Rationales},
    author={Just, Hoang Anh and Jin, Ming and Sahu, Anit and Phan, Huy and Jia, Ruoxi},
    journal = {arXiv preprint arXiv:2407.14477},
    year    = {2024}
}

@article{yu2024selfgenerated,
    title   = {Self-Generated Critiques Boost Reward Modeling for Language Models},
    author={Yu, Yue and Chen, Zhengxing and Zhang, Aston and Tan, Liang and Zhu, Chenguang and Pang, Richard Yuanzhe and Qian, Yundi and Wang, Xuewei and Gururangan, Suchin and Zhang, Chao and others},
    journal = {arXiv preprint arXiv:2411.16646},
    year    = {2024}
}

@article{xiong2023iterative,
    title   = {Iterative Preference Learning from Human Feedback: Bridging Theory and Practice for RLHF under KL-Constraint},
    author  = {Xiong, Wei and Dong, Hanze and Ye, Chenlu and Zhong, Han and Jiang, Nan and Zhang, Tong},
    journal = {arXiv preprint arXiv:2312.11456},
    year    = {2023}
}

@inproceedings{yang2018hotpotqa,
    title     = {HotpotQA: A Dataset for Diverse, Explainable Multi-hop Question Answering},
    author    = {Yang, Zhilin and Qi, Peng and Zhang, Saizheng and Bengio, Yoshua and Cohen, William and Salakhutdinov, Ruslan and Manning, Christopher D.},
    booktitle = {Proceedings of the 2018 Conference on Empirical Methods in Natural Language Processing},
    pages     = {2369--2380},
    year      = {2018},
    publisher = {Association for Computational Linguistics}
}

@article{yang2023opro,
    title   = {Large Language Models as Optimizers},
    author  = {Yang, Chengrun and Wang, Xuezhi and Lu, Yifeng and Liu, Hanxiao and Le, Quoc V. and Zhou, Denny and Chen, Xinyun},
    journal = {arXiv preprint arXiv:2309.03409},
    year    = {2023}
}

@article{yang2025qwen3,
  title={Qwen3 technical report},
  author={Yang, An and Li, Anfeng and Yang, Baosong and Zhang, Beichen and Hui, Binyuan and Zheng, Bo and Yu, Bowen and Gao, Chang and Huang, Chengen and Lv, Chenxu and others},
  journal={arXiv preprint arXiv:2505.09388},
  year={2025}
}

@article{yao2022react,
    title   = {ReAct: Synergizing Reasoning and Acting in Language Models},
    author  = {Shunyu Yao and Jeffrey Zhao and Dian Yu and Nan Du and Izhak Shafran and Karthik Narasimhan and Yuan Cao},
    journal = {ICLR},
    year    = {2023}
}

@article{yao2023tree,
    title         = {{Tree of Thoughts}: Deliberate Problem Solving with Large Language Models},
    author        = {Yao, Shunyu and Yu, Dian and Zhao, Jeffrey and Shafran, Izhak and Griffiths, Thomas L. and Cao, Yuan and Narasimhan, Karthik},
    year          = {2023},
    eprint        = {2305.10601},
    archiveprefix = {arXiv},
    primaryclass  = {cs.CL}
}

@article{yuksekgonul2024textgrad,
    title   = {TextGrad: Automatic "Differentiation" via Text},
    author  = {Yuksekgonul, Mert and Bianchi, Federico and Boen, Joseph and Liu, Sheng and Lu, Pan and Huang, Zhi and Guestrin, Carlos and Zou, James},
    journal = {arXiv preprint arXiv:2406.07496},
    year    = {2024}
}

@article{zelikman2022star,
    title   = {STaR: Bootstrapping Reasoning With Reasoning},
    author  = {Zelikman, Eric and Wu, Yuhuai and Mu, Jesse and Goodman, Noah D.},
    journal = {arXiv preprint arXiv:2203.14465},
    year    = {2022}
}

@article{zhang2024instruction,
    title   = {Instruction Tuning for Large Language Models: A Survey},
    author  = {Zhang, Shengyu and Dong, Linfeng and Li, Xiaoya and Zhang, Sen and Sun, Xiaofei and Wang, Shuhe and Li, Jiwei and Hu, Runyi and Zhang, Tianwei and Wu, Fei and others},
    journal = {arXiv preprint arXiv:2308.10792},
    year    = {2024}
}

@article{zhou2022ape,
    title   = {Large Language Models Are Human-Level Prompt Engineers},
    author  = {Zhou, Yongchao and Muresanu, Andrei Ioan and Han, Ziwen and Paster, Keiran and Pitis, Silviu and Chan, Harris and Ba, Jimmy},
    journal = {arXiv preprint arXiv:2211.01910},
    year    = {2022}
}

@article{zhou2025thinking,
    title   = {Towards Thinking-Optimal Scaling of Test-Time Compute for LLM Reasoning},
    author={Yang, Wenkai and Ma, Shuming and Lin, Yankai and Wei, Furu},
    journal = {arXiv preprint arXiv:2502.18080},
    year    = {2025}
}

@article{zhu2024deepseek,
    title   = {DeepSeek-Coder-V2: Breaking the Barrier of Closed-Source Models in Code Intelligence},
    author  = {Zhu, Qihao and Guo, Daya and Shao, Zhihong and Yang, Dejian and Wang, Peiyi and Xu, Runxin and Wu, Y and Li, Yukun and Gao, Huazuo and Ma, Shirong and others},
    journal = {arXiv preprint arXiv:2406.11931},
    year    = {2024}
}

@article{wang2024efficient,
  title={Efficient evolutionary search over chemical space with large language models},
  author={Wang, Haorui and Skreta, Marta and Ser, Cher-Tian and Gao, Wenhao and Kong, Lingkai and Strieth-Kalthoff, Felix and Duan, Chenru and Zhuang, Yuchen and Yu, Yue and Zhu, Yanqiao and others},
  journal={arXiv preprint arXiv:2406.16976},
  year={2024}
}

@article{feng2024natural,
  title={Natural language reinforcement learning},
  author={Feng, Xidong and Liu, Bo and Song, Yan and Fu, Haotian and Wan, Ziyu and Koushik, Girish A and Hu, Zhiyuan and Yang, Mengyue and Wen, Ying and Wang, Jun},
  journal={arXiv preprint arXiv:2411.14251},
  year={2024}
}

@article{golberg1989genetic,
  title={Genetic algorithms in search, optimization, and machine learning},
  author={Golberg, David E},
  journal={Addion wesley},
  volume={1989},
  number={102},
  pages={36},
  year={1989}
}

@book{holland1992adaptation,
  title={Adaptation in natural and artificial systems: an introductory analysis with applications to biology, control, and artificial intelligence},
  author={Holland, John H},
  year={1992},
  publisher={MIT press}
}

@article{guo2023match,
    title   = {When Gradient Descent Meets Derivative-Free Optimization: A Match Made in Black-Box Scenario},
    author  = {Guo, Zerui and Sun, Tianxiang and Qiu, Xipeng and Huang, Xuanjing},
    journal = {arXiv preprint arXiv:2305.10013},
    year    = {2023}
}

@article{chen2024derivative,
    title   = {Derivative-Free Optimization for Low-Rank Adaptation in Large Language Models},
    author  = {Chen, Xiangyi and Liu, Sijia and Hong, Mingyi},
    journal = {arXiv preprint arXiv:2403.01754},
    year    = {2024}
}

@article{lange2024evolution,
    title   = {Large Language Model-Based Evolutionary Optimizer: Reasoning with Elitism},
    author  = {Lange, Robert Tjarko and Tian, Yingtao and Tang, Yujin},
    journal = {arXiv preprint arXiv:2403.02054},
    year    = {2024}
}

\clearpage
\appendix

\section{Formal Statements and Proofs}
\label{app:proof}

\setcounter{proposition}{0}
\begin{proposition}[Linear convergence under PL with rationale-guided directions]
\label{prop:gradient-linear-full}
Let $r: Z \to \mathbb{R}$ be $L$-smooth and satisfy the $\mu$-PL condition (for maximization)
\[
\frac{1}{2}\|\nabla r(z)\|_2^2 \;\ge\; \mu\big(r(z^\star)-r(z)\big)\qquad \forall z\in Z.
\]
At iteration $t$, suppose a direction $v_t$ satisfies
\[
\mathbb{E}[\,v_t \mid z_t\,] \;=\; \alpha\,\nabla r(z_t),\qquad
\mathbb{E}\!\left[\,\|v_t-\mathbb{E}[v_t\mid z_t]\|_2^2 \,\big|\, z_t\right] \;\le\; \sigma^2 \|\nabla r(z_t)\|_2^2,
\]
with constants $\alpha>0$ and $\sigma \ge 0$, and define $\kappa_1 \triangleq \alpha^2+\sigma^2$.
Consider the update $z_{t+1}=z_t+\eta v_t$. If a constraint set $Z$ is present, assume $z_t+\eta v_t\in Z$ (i.e., the projection is inactive).
With stepsize $\eta=\alpha/(L\kappa_1)$,
\[
\mathbb{E}\!\left[r(z^\star)-r(z_{t+1}) \,\big|\, z_t\right]
\;\le\;
\Big(1-\tfrac{\mu\,\alpha^2}{L\,\kappa_1}\Big)\,\big[r(z^\star)-r(z_t)\big].
\]
Unrolling yields
\[
\mathbb{E}\!\left[r(z^\star)-r(z_T)\right]\;\le\;
\Big(1-\tfrac{\mu\,\alpha^2}{L\,\kappa_1}\Big)^T \big[r(z^\star)-r(z_0)\big],
\]
so $\epsilon$-accuracy is achieved in
\[
T \;=\; O\!\left(\frac{L(\alpha^2+\sigma^2)}{\mu\,\alpha^2}\,\log\frac{1}{\epsilon}\right)
\]
iterations.
\end{proposition}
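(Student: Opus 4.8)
The plan is to prove the one-step contraction and then unroll it, since the final complexity bound follows mechanically once the contraction factor is in hand. First I would apply the $L$-smoothness of $r$ in its descent-lemma form: for maximization, $r(z_{t+1}) \ge r(z_t) + \langle \nabla r(z_t), z_{t+1}-z_t\rangle - \tfrac{L}{2}\|z_{t+1}-z_t\|_2^2$. Substituting the update $z_{t+1}=z_t+\eta v_t$ gives $r(z_{t+1}) \ge r(z_t) + \eta\langle \nabla r(z_t), v_t\rangle - \tfrac{L\eta^2}{2}\|v_t\|_2^2$. I would then take the conditional expectation given $z_t$ and use the two assumptions on $v_t$: the unbiasedness-up-to-scaling condition $\mathbb{E}[v_t\mid z_t]=\alpha\nabla r(z_t)$ handles the inner-product term, yielding $\eta\alpha\|\nabla r(z_t)\|_2^2$, while the second-moment bound controls the quadratic term.

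Next I would compute $\mathbb{E}[\|v_t\|_2^2\mid z_t]$ via the bias-variance split $\mathbb{E}[\|v_t\|_2^2\mid z_t]=\|\mathbb{E}[v_t\mid z_t]\|_2^2+\mathbb{E}[\|v_t-\mathbb{E}[v_t\mid z_t]\|_2^2\mid z_t]$, which by the two hypotheses is at most $\alpha^2\|\nabla r(z_t)\|_2^2+\sigma^2\|\nabla r(z_t)\|_2^2=\kappa_1\|\nabla r(z_t)\|_2^2$. Plugging these into the expected descent lemma gives
\[
\mathbb{E}[r(z_{t+1})\mid z_t] \ge r(z_t) + \Big(\eta\alpha - \tfrac{L\eta^2\kappa_1}{2}\Big)\|\nabla r(z_t)\|_2^2.
\]
The coefficient of $\|\nabla r(z_t)\|_2^2$ is maximized (and simplifies cleanly) at the prescribed stepsize $\eta=\alpha/(L\kappa_1)$, where it equals $\tfrac{\alpha^2}{2L\kappa_1}$. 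This is the natural choice and explains why the proposition fixes that value.

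Finally, I would invoke the $\mu$-PL inequality $\tfrac12\|\nabla r(z_t)\|_2^2 \ge \mu\big(r(z^\star)-r(z_t)\big)$ to replace the gradient-norm term by the suboptimality gap: $\tfrac{\alpha^2}{2L\kappa_1}\|\nabla r(z_t)\|_2^2 \ge \tfrac{\mu\alpha^2}{L\kappa_1}\big(r(z^\star)-r(z_t)\big)$. Subtracting both sides from $r(z^\star)$ and rearranging converts the progress bound into the claimed contraction $\mathbb{E}[r(z^\star)-r(z_{t+1})\mid z_t] \le \big(1-\tfrac{\mu\alpha^2}{L\kappa_1}\big)\big[r(z^\star)-r(z_t)\big]$. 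Taking total expectation and iterating over $t$ gives the geometric decay, and solving $(1-c)^T \le \epsilon$ with $c=\mu\alpha^2/(L\kappa_1)$ via $\log(1-c)\le -c$ yields the stated $T=O\!\big(\tfrac{L(\alpha^2+\sigma^2)}{\mu\alpha^2}\log\tfrac1\epsilon\big)$.

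The main obstacle is mostly bookkeeping rather than conceptual: I must ensure the contraction factor stays in $[0,1)$, which requires $\mu\alpha^2/(L\kappa_1)<1$. Since $\mu\le L$ always holds for an $L$-smooth $\mu$-PL function and $\alpha^2\le\kappa_1$ by definition of $\kappa_1=\alpha^2+\sigma^2$, we have $\mu\alpha^2/(L\kappa_1)\le \mu/L\le 1$, so the factor is well-defined; I would note this explicitly. The only other subtlety is the projection assumption: the descent lemma as used presumes the unconstrained update $z_t+\eta v_t$ is the actual next iterate, which is exactly what the hypothesis ``the projection is inactive'' guarantees, so no extra nonexpansiveness argument for the projection is needed here.
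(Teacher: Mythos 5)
Your proof is correct and follows essentially the same route as the paper's: the descent lemma from $L$-smoothness, conditional expectation with the bias--variance split $\mathbb{E}[\|v_t\|_2^2\mid z_t]\le\kappa_1\|\nabla r(z_t)\|_2^2$, the PL inequality, and the stepsize $\eta=\alpha/(L\kappa_1)$ to obtain the contraction factor $1-\mu\alpha^2/(L\kappa_1)$. The only differences are cosmetic --- you substitute the stepsize before applying PL rather than after, and you add the (correct) side remarks that the contraction factor lies in $[0,1)$ and that the inactive-projection assumption is exactly what lets the unconstrained descent lemma apply.
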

\begin{proof}
$L$-smoothness gives the two-sided bound
\[
r(z_t+\eta v_t) \;\ge\; r(z_t) + \eta \langle \nabla r(z_t), v_t\rangle - \tfrac{L}{2}\eta^2 \|v_t\|_2^2.
\]
Taking conditional expectation and using $\mathbb{E}[v_t|z_t]=\alpha \nabla r(z_t)$ and
$\mathbb{E}\!\left[\|v_t\|_2^2 \,\big|\, z_t\right] \le (\alpha^2+\sigma^2)\,\|\nabla r(z_t)\|_2^2=\kappa_1\|\nabla r(z_t)\|_2^2$,
\[
\mathbb{E}\!\left[r(z_{t+1}) \mid z_t\right]
\;\ge\;
r(z_t) + \Big(\eta\alpha - \tfrac{L}{2}\eta^2\kappa_1\Big)\,\|\nabla r(z_t)\|_2^2.
\]
By the PL inequality, $\|\nabla r(z_t)\|_2^2 \ge 2\mu\,[r(z^\star)-r(z_t)]$, so
\[
\mathbb{E}\!\left[r(z^\star)-r(z_{t+1}) \mid z_t\right]
\;\le\;
\Big(1 - 2\mu\eta\alpha + \mu L \eta^2 \kappa_1\Big)\,[r(z^\star)-r(z_t)].
\]
Choosing $\eta=\alpha/(L\kappa_1)$ makes the bracket equal to $1 - \mu\alpha^2/(L\kappa_1)$, yielding the claim.
\end{proof}

\subsection{Query Complexity and Dimension Dependence}

\textbf{Dimension-Free Case.} When rationales provide full gradient information ($v_t \in \mathbb{R}^d$) at unit cost, the query complexity equals $T$ and is dimension-independent:
\begin{align}
\text{Queries} = O\left(\frac{L(\alpha^2 + \sigma^2)}{\alpha^2 \mu} \log \frac{1}{\epsilon}\right)
\end{align}

\textbf{Coordinate-Sparse Case.} Suppose each query reveals one coordinate of $\nabla r(z_t)$ chosen uniformly at random. Using the unbiased estimator $v_t = d\,(\partial_{i} r(z_t))\,e_i$ with $i\sim\mathrm{Unif}([d])$ gives $\alpha=1$, $\sigma^2=d-1$, and hence $\kappa_1=d$ and stepsize $\eta=1/(Ld)$. We have
\[
T = O\!\Big(\frac{L d}{\mu}\log\frac{1}{\epsilon}\Big),\qquad
\text{Queries} = O\!\Big(\frac{L d}{\mu}\log\frac{1}{\epsilon}\Big).
\]
Equivalently, averaging $m$ independent coordinate queries per iteration yields $\sigma^2=(d-1)/m$; taking $m=d$ recovers $T=O((L/\mu)\log(1/\epsilon))$ with $d$ queries per iteration, so total queries remain $\Theta\!\big(\frac{L d}{\mu}\log\frac{1}{\epsilon}\big)$.

This clarifies when and why dimension appears in the complexity.

\section{Lower Bounds for Exhaustive/Random Zeroth-Order Search}
\label{sec:exhaustive-lb}

We formalize the intrinsic slowness of exhaustive (grid) search and best-of-$N$ random sampling when only function values (or preferences) are used without directional information. The hard instance is the strongly concave quadratic
\[
r(z) \;=\; r(z^\star) - \tfrac{\mu}{2}\,\|z-z^\star\|_2^2,
\qquad z\in B_R(z^\star)\subset\mathbb{R}^d,
\]
whose $\epsilon$-optimal set is the ball $B_{\rho_\epsilon}(z^\star)$ with radius $\rho_\epsilon=\sqrt{2\epsilon/\mu}$.

\begin{proposition}[Grid-search lower bound]
\label{prop:grid}
Let $B_R(z^\star)\subset\mathbb{R}^d$ and a hypercubic grid of spacing $h$. Its covering radius is $\rho=\frac{\sqrt{d}\,h}{2}$. To guarantee that \emph{for all} placements of $z^\star$ there exists a grid point in the $\epsilon$-optimal ball $B_{\rho_\epsilon}(z^\star)$ with $\rho_\epsilon=\sqrt{2\epsilon/\mu}$, it suffices that $\rho\le\rho_\epsilon$ (i.e., $h\le 2\rho_\epsilon/\sqrt d$). Furthermore, any such grid restricted to $B_R(z^\star)$ must contain at least
\[
N \;\ge\; \left(\frac{R}{\rho}\right)^{\!d}
\;=\; \Big(\frac{R\sqrt d}{2\rho_\epsilon}\Big)^{\!d}
\;=\; \Big(\frac{\mu R^2 d}{8\,\epsilon}\Big)^{\!d/2}
\]
points. Hence exhaustive grid search is exponential in $d$ and polynomial in $1/\epsilon$ with exponent $d/2$ on this family.
\end{proposition}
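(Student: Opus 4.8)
The plan is to reduce all three assertions to elementary covering-number geometry, treating the covering radius, the $\epsilon$-ball, and the point count in turn. First I would pin down the covering radius of the spacing-$h$ hypercubic grid. Any point of $\mathbb{R}^d$ lies in some translated cell $\prod_{i}[k_i h,(k_i{+}1)h]$, and within a cell the point farthest from every grid vertex is the cell center, whose distance to the nearest vertex is $\sqrt{\sum_{i=1}^d (h/2)^2}=\tfrac{\sqrt d\,h}{2}$. This value is attained, so the covering radius is exactly $\rho=\tfrac{\sqrt d\,h}{2}$; consequently every point of $\mathbb{R}^d$ — in particular any placement of $z^\star$ — has a grid point within distance $\rho$.

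Next I would identify the $\epsilon$-optimal set. Since $r(z)=r(z^\star)-\tfrac{\mu}{2}\|z-z^\star\|_2^2$, the condition $r(z^\star)-r(z)\le\epsilon$ is equivalent to $\|z-z^\star\|_2\le\sqrt{2\epsilon/\mu}=\rho_\epsilon$, so the $\epsilon$-optimal set is precisely the ball $B_{\rho_\epsilon}(z^\star)$. The sufficiency claim is then immediate: if $\rho\le\rho_\epsilon$, the nearest grid point to $z^\star$ lies within $\rho\le\rho_\epsilon$, hence inside $B_{\rho_\epsilon}(z^\star)$, for every placement of $z^\star$; and $\rho\le\rho_\epsilon$ is just $h\le 2\rho_\epsilon/\sqrt d$. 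I would also record that this threshold is essentially tight: if $\rho>\rho_\epsilon$, placing $z^\star$ at a cell center leaves all grid points at distance $>\rho_\epsilon$, so no grid point is $\epsilon$-optimal — which is what makes the ensuing lower bound meaningful rather than vacuous.

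For the count, the key is a volume-based covering-number bound. Because $\rho$ is the covering radius, the closed balls $\{B_\rho(g)\}$ over grid points $g$ cover $\mathbb{R}^d$, so in particular they cover $B_R(z^\star)$; comparing volumes, the number of grid points whose balls meet $B_R$ satisfies $N\cdot\mathrm{Vol}(B_\rho)\ge\mathrm{Vol}(B_R)$, i.e. $N\ge(R/\rho)^d$ once the common $V_d$ factors cancel. Substituting the threshold value $\rho=\rho_\epsilon=\sqrt{2\epsilon/\mu}$ (equivalently expressing everything through the cell spacing $h=2\rho_\epsilon/\sqrt d$) and rewriting $(\,\cdot\,)^d$ in $(\,\cdot\,)^{d/2}$ form yields the advertised $\left(\mu R^2 d/(8\epsilon)\right)^{d/2}$ scaling, which is exponential in $d$ and polynomial in $1/\epsilon$ with exponent $d/2$.

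I expect the counting step to be the only delicate point, and it carries two bookkeeping subtleties. First, the grid points participating in the cover of $B_R$ may sit just outside it (within $B_{R+\rho}$), so a fully clean statement either enlarges the radius by the lower-order term $\rho$ or counts cells whose centers lie in $B_R$. Second, the exact dimensional constant depends on the counting convention: comparing against covering balls of radius $\rho$ gives a factor $(R/\rho_\epsilon)^d$, whereas counting cubic cells of side $h$ gives the stated $(R\sqrt d/(2\rho_\epsilon))^d$, the two differing by a benign $(\sqrt d/2)^d$ factor that leaves the qualitative conclusion — exponential dependence on $d$ — unchanged. The remaining ingredients, namely the cell-center distance computation and the quadratic sublevel-set identification, are routine.
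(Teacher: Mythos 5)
Your proof follows essentially the same route as the paper's: a volume-comparison covering bound $N\,V_d\,\rho^d \ge V_d\,R^d$ giving $N\ge (R/\rho)^d$, followed by substitution of $\rho_\epsilon=\sqrt{2\epsilon/\mu}$; the extra material you supply (the cell-center computation of the covering radius, the identification of the $\epsilon$-optimal sublevel set as $B_{\rho_\epsilon}(z^\star)$, and the tightness remark) is correct and only makes the argument more self-contained. One point you raise deserves emphasis: under the ball-covering convention the conclusion is $N\ge(R/\rho)^d\ge(R/\rho_\epsilon)^d=\big(\mu R^2/(2\epsilon)\big)^{d/2}$, whereas the advertised constant $\big(R\sqrt d/(2\rho_\epsilon)\big)^d=\big(\mu R^2 d/(8\epsilon)\big)^{d/2}$ is what you get by counting cells of side $h$ --- the paper's own chain of equalities silently conflates $\rho$ with $h$ (at the threshold $h=2\rho_\epsilon/\sqrt d$ one has $\rho=\rho_\epsilon$, so $(R/\rho)^d\ne(R\sqrt d/(2\rho_\epsilon))^d$ for $d>4$), and your flagged $(\sqrt d/2)^d$ discrepancy is therefore a genuine, if qualitatively harmless, slip in the statement rather than an artifact of your argument. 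Your further caveat that the covering grid points may lie slightly outside $B_R$ (costing a lower-order $((R-\rho)/\rho)^d$ correction) is likewise a real bookkeeping point that the paper's one-line proof glosses over.
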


\begin{proof}
Coverage of $B_R(z^\star)$ by $N$ balls of radius $\rho$ centered at grid points implies $N V_d \rho^d \ge V_d R^d$, hence $N\ge (R/\rho)^d$. With $\rho=\sqrt d\,h/2$ and $h\le 2\rho_\epsilon/\sqrt d$, we obtain $N\ge (R\sqrt d/(2\rho_\epsilon))^d$. Substitute $\rho_\epsilon=\sqrt{2\epsilon/\mu}$ to conclude.
\end{proof}

\begin{proposition}[Best-of-$N$ random sampling lower bound]
\label{prop:random-fixed}
Draw $X_1,\dots,X_N \stackrel{\text{i.i.d.}}{\sim}\mathrm{Unif}(B_R(z^\star))$ and let $\hat z=\arg\max_i r(X_i)$ for
$r(z)=r(z^\star)-\tfrac{\mu}{2}\|z-z^\star\|_2^2$. Then with $a\triangleq 2/d$,
\[
\mathbb{E}\!\left[r(z^\star)-r(\hat z)\right]
=\frac{\mu R^2}{2}\,N\,\mathrm{B}(1{+}a,N)
=\frac{\mu R^2}{2}\,\Gamma(1{+}a)\,\frac{\Gamma(N+1)}{\Gamma(N+1+a)}.
\]
Moreover, for all $d\ge 1$ (so $a\in(0,2]$),
\[
\frac{\Gamma(N+1)}{\Gamma(N+1+a)} \ \ge\ (N+2)^{-a},
\]
and thus
\[
\mathbb{E}\!\left[r(z^\star)-r(\hat z)\right]
\ \ge\ \frac{\mu R^2}{2}\,\Gamma\!\Big(1+\frac{2}{d}\Big)\,(N+2)^{-\frac{2}{d}}
\ =\ \Omega\!\big(N^{-\frac{2}{d}}\big).
\]
\end{proposition}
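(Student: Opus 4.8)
The plan is to compute the expected suboptimality exactly and then lower-bound the resulting Gamma-function ratio. First I would reduce everything to the radial distance. Since $r(z^\star)-r(\hat z)=\tfrac{\mu}{2}\|\hat z - z^\star\|_2^2$ and $\hat z$ maximizes $r$ over the samples, maximizing $r$ is equivalent to \emph{minimizing} the distance to $z^\star$. So if I let $U_i \triangleq \|X_i - z^\star\|_2 / R \in [0,1]$ be the normalized radii, then $r(z^\star)-r(\hat z) = \tfrac{\mu R^2}{2}\,\min_i U_i^2$, and the whole problem becomes computing $\mathbb{E}[\min_i U_i^2]$.

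Next I would find the distribution of a single $U_i$. For $X\sim\mathrm{Unif}(B_R(z^\star))$ in $\mathbb{R}^d$, the probability that the normalized radius is at most $u$ is the volume ratio $(uR)^d/R^d = u^d$, so $U_i$ has CDF $F(u)=u^d$ on $[0,1]$. Equivalently $U_i^2$ has CDF $\Pr[U_i^2\le s]=s^{d/2}$. Letting $M\triangleq \min_i U_i^2$, independence gives the survival function $\Pr[M>s]=(1-s^{d/2})^N$, and then $\mathbb{E}[M]=\int_0^1 (1-s^{d/2})^N\,ds$. The substitution $w=s^{d/2}$ (so $s=w^{2/d}$, $ds=\tfrac{2}{d}w^{2/d-1}\,dw$) turns this into $\tfrac{2}{d}\int_0^1 w^{a-1}(1-w)^N\,dw$ with $a=2/d$, which is exactly $\tfrac{2}{d}\,\mathrm{B}(a,N+1)=a\,\mathrm{B}(a,N+1)=N\,\mathrm{B}(1+a,N)$ after applying the Beta-function identity $a\,\mathrm{B}(a,N+1)=N\,\mathrm{B}(1+a,N)$. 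Rewriting the Beta function via $\mathrm{B}(1+a,N)=\Gamma(1+a)\Gamma(N)/\Gamma(N+1+a)$ and using $N\Gamma(N)=\Gamma(N+1)$ yields the claimed closed form $\tfrac{\mu R^2}{2}\,\Gamma(1+a)\,\Gamma(N+1)/\Gamma(N+1+a)$.

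For the asymptotic lower bound I would only need the ratio bound $\Gamma(N+1)/\Gamma(N+1+a)\ge (N+2)^{-a}$ for $a\in(0,2]$. The cleanest route is log-convexity of $\Gamma$: for $a\in(0,1]$ the standard Gautschi-type inequality $\Gamma(x+1)/\Gamma(x+a)\le (x+1)^{1-a}$ rearranges into the needed bound, and for $a\in(1,2]$ I would peel off one factor by writing $\Gamma(N+1+a)=(N+a)\Gamma(N+a)$ and reducing to the $a-1\in(0,1]$ case, absorbing the stray linear factor into the $(N+2)$ slack. Substituting this bound gives $\mathbb{E}[r(z^\star)-r(\hat z)]\ge \tfrac{\mu R^2}{2}\Gamma(1+\tfrac2d)(N+2)^{-2/d}$, and since the prefactor is a positive constant depending only on $d$, the right-hand side is $\Omega(N^{-2/d})$, completing the proof.

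I expect the exact Beta-integral evaluation to be routine; the main obstacle is establishing the Gamma-ratio inequality uniformly over the full range $a\in(0,2]$ with a clean constant. The subtlety is that the most quotable Gautschi bounds are stated for $a\in(0,1)$, so I must handle the regime $d=1$ (where $a=2$) separately, and the choice of the shift $N+2$ rather than $N$ or $N+1$ is precisely what buys enough room to make a single inequality hold across both subcases without case-dependent constants.
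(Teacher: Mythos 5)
Your proposal is correct and follows essentially the same route as the paper: reduce to the minimum radial distance, evaluate the resulting Beta integral (you via the survival-function representation of $\mathbb{E}[\min_i U_i^2]$, the paper via the density of $R_{\min}=\min_i\|X_i-z^\star\|_2$ --- the same integral after integration by parts), and identify it as $N\,\mathrm{B}(1+\tfrac{2}{d},N)$. The only substantive difference is that you sketch a proof of $\Gamma(N+1)/\Gamma(N+1+a)\ge (N+2)^{-a}$ via Gautschi's inequality for $a\in(0,1]$ plus peeling off one factor for $a\in(1,2]$, whereas the paper merely asserts this inequality; your sketch is valid (indeed $(N+1+b)(N+2)^{b}\le (N+2)^{1+b}$ for $b\le 1$) and fills that gap.
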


\begin{proof}
Let $R_i=\|X_i-z^\star\|_2$ and $R_{\min}=\min_i R_i$. The CDF of $R_{\min}$ is
$F(r)=1-(1-(r/R)^d)^N$ for $r\in[0,R]$. Differentiating,
$f(r)=N d r^{d-1} R^{-d}(1-(r/R)^d)^{N-1}$. Then
\[
\mathbb{E}[R_{\min}^2]
=\int_0^R r^2 f(r)\,dr
= N R^2 \int_0^1 t^{\frac{2}{d}}(1-t)^{N-1} dt
= N R^2\,\mathrm{B}\!\big(1+\tfrac{2}{d},\,N\big),
\]
where $t=(r/R)^d$ and $\mathrm{B}$ is the Beta function. Using
$\mathrm{B}(a,b)=\frac{\Gamma(a)\Gamma(b)}{\Gamma(a+b)}$ gives the exact expression. For the bound, we use the inequality $\Gamma(N+1)/\Gamma(N+1+a) \ge (N+2)^{-a}$ which holds for all $a\in(0,2]$ and $N\ge 1$.
\end{proof}

\section{Extended Experiment Section}

\subsection{Implementation Details}
\label{app:implementation}

\paragraph{SVG Code Optimization.} We employ a tournament-style approach where \texttt{gpt-5-mini} generates SVG/TikZ code that gets rendered to PNG images for pairwise aesthetic comparisons by a separate instance of the same model acting as judge. The system maintains a ``champion'' design that only updates when both A-vs-B and B-vs-A orderings consistently agree on a winner, accumulating winning rationales into the generation prompt to guide aesthetic improvements across iterations. The judge provides natural language rationales explaining aesthetic preferences that inform subsequent generations.

\paragraph{IFBench Prompt Optimization.}
We closely follow the setting of~\citet{agrawal2025gepa} for this experiment, including their choice of LLMs (Qwen3-8B and GPT-4.1-mini) and multi-stage DSPy programs.
For Qwen3-8B, we use 10 iterations with no early stopping (patience=0) with temperature 0.6, and evaluate 2 prompt candidates per round across 20 hard examples.
For GPT-4.1-mini, we increase to 15 iterations with early stopping (patience=5) with temperature 1.0 for exploration.
Both configurations use the same prompt improver template, described in~\cref{app:prompts}.

We programmatically generate a textual description of the difference between two prompts.
To compare two prompts, we first partition the training set into four quadrants based on outcomes: examples where prompt A succeeds and B fails (A\_wins), A fails and B succeeds (B\_wins), both fail (tie\_fail), or both succeed (tie\_success).
We then use the same LLM to propose hypotheses about input characteristics (based on the prompt text) and output patterns (based on the response and evaluation feedback), producing around 20 hypotheses for each category.
To evaluate whether each hypothesis applies to each example, we use the same LLM as the tagger that outputs binary labels (1 if the hypothesis matches, 0 otherwise) for all hypotheses in a single call per example, processing hundreds of examples in parallel.
We then compute lift metrics for each hypothesis-quadrant pair, where lift is the ratio of conditional probability to the base rate (i.e., how much more likely an outcome is given the hypothesis holds).
We validate the hypotheses statistically using Fisher's exact test, and filter for hypotheses that are statistically significant at $p < 0.1$ with minimum support of 3 examples and lift $\geq 2.0$ for A/B wins or $\geq 1.5$ for failures.
This analysis identifies which input patterns correlate with differential performance and which output characteristics appear when one prompt outperforms the other, providing actionable insights for prompt improvement.

\paragraph{Molecule Optimization.} We implement molecular optimization using the DOCKSTRING package~\citep{garciaortegon2021dockstring} for protein-ligand docking simulations across six therapeutic targets. The system begins with three simple seed molecules (acetamide, pentane, benzene) and progressively evolves SMILES strings through iterative feedback loops that incorporate RDKit molecular properties, protein binding site information, and similarity comparisons to approved drugs as metadata. We use the combined score function suggested by DOCKSTRING:
\begin{equation}
    s_{\text{overall}}(\text{molecule}, \text{protein}) = -\texttt{Vina}(\text{molecule}, \text{protein}) - 10 * (1 - \texttt{QED}(\text{molecule})),
\end{equation} where \texttt{Vina} provides the binding affinity prediction (kcal/mol, more negative is better) and the QED penalty term penalizes molecules with poor drug-likeness, with lower overall scores indicating better molecules that balance binding strength and drug-like properties. Note that QED scores range from $0$ to $1$ while Vina scores typically range from $-3.0$ to $-12.0$ kcal/mol. For Feedback Descent, we use a batch size of 8 and top-k selection of 10 examples.

\subsection{Additional Results}
\begin{figure}[t]
\centering \includegraphics[width=0.98\linewidth]{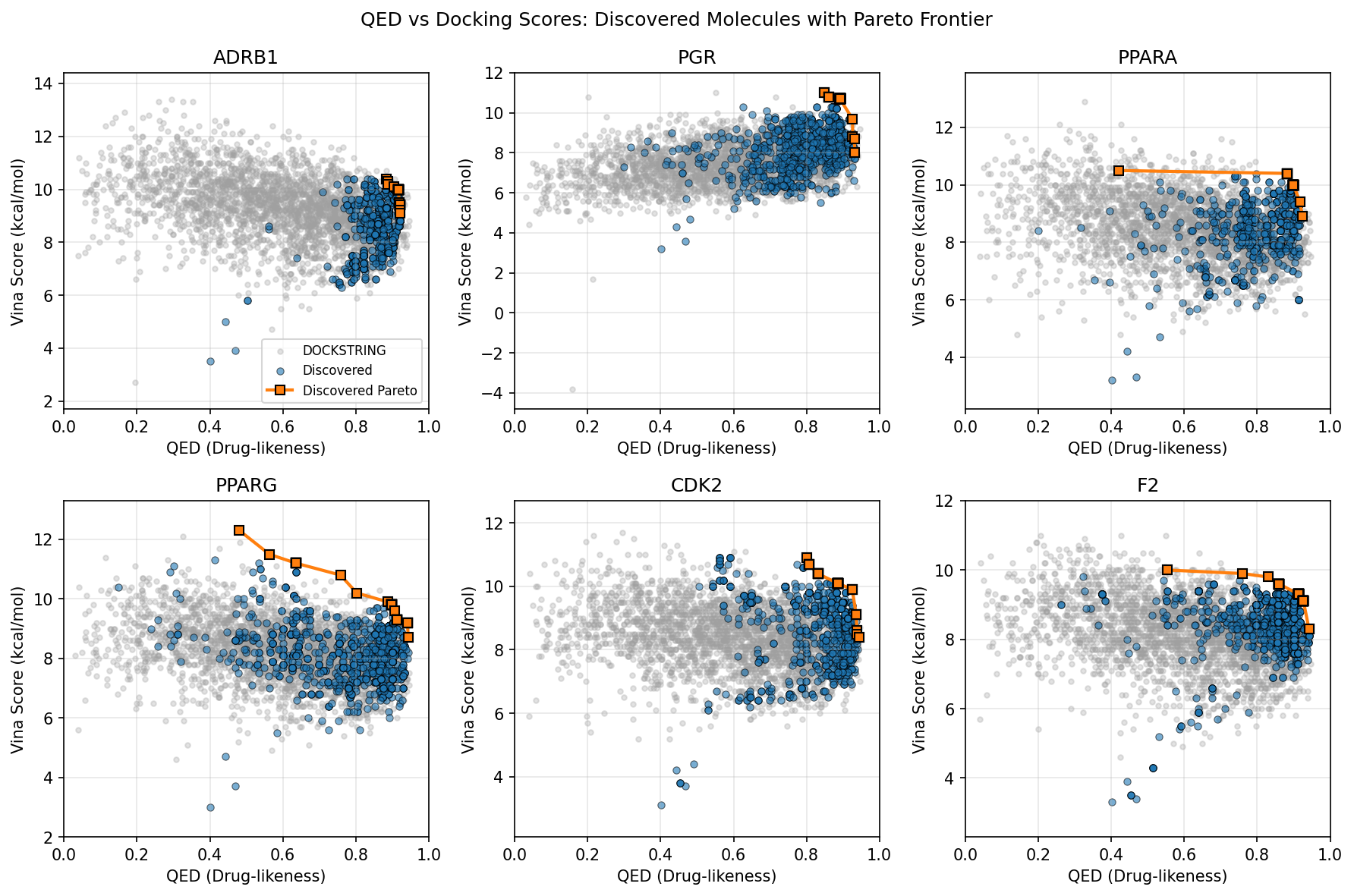}
\caption{
  \label{fig:pareto_full}
Pareto frontiers of discovered molecules (blue) compared against molecules in the DOCKSTRING dataset (gray) across six protein targets. The highlighted orange markers indicate molecules on the discovered Pareto frontier, achieving joint improvements in docking affinity (Vina score) and drug-likeness (QED).
  }
\end{figure}

\label{app:additional_results}

\cref{fig:pareto_full} shows that across all protein targets, the discovered molecules extend beyond the DOCKSTRING baseline along both axes. The resulting Pareto frontiers illustrate consistent improvements in the joint trade-off between docking affinity and drug-likeness, highlighting that feedback-guided search yields coordinated gains rather than isolated outliers.

\begin{table}[t]
\centering
\resizebox{\linewidth}{!}{%
\begin{tabular}{llccccccc}
\toprule
& \textbf{Method} & ADRB1 & PGR & PPARA & PPARG & CDK2 & F2\\
\midrule
\multirow{6}{*}{\rotatebox[origin=c]{90}{\shortstack{DOCKSTRING \\ (N=260155)}}}
& Top $50\%$ & 5.305 & 3.478 & 4.549 & 4.210 & 4.385 & 4.168 \\
& Top $90\%$ & 8.785 & 7.878 & 7.987 & 7.658 & 7.733 & 7.477 \\
& Top $99\%$ & 9.620 & 8.703 & 8.718 & 8.449 & 8.453 & 8.139 \\
& Top $99.9\%$ & 10.209 & 9.260 & 9.230 & 9.012 & 8.979 & 8.722 \\
& Top $99.99\%$ & \blueul{10.742} & \blueul{9.723} & 9.821 & 9.518 & 9.509 & 9.252 \\
& Best Molecule & \blueul{11.330} & \blueul{9.742} & 9.907 & 9.529 & 9.534 & \blueul{9.311} \\
\midrule
& GP-BO$^\dagger$~\citep{tripp2021a} & $10.552 \pm 0.140$ & $9.307 \pm 0.177$ & $9.680 \pm 0.337$ & $9.485 \pm 0.279$ & $9.067 \pm 0.289$ & $8.686 \pm 0.068$ \\
& Graph MCTS$^\dagger$~\citep{jensen2019graph} & $8.883 \pm 0.826$ & $7.819 \pm 0.319$ & $7.363 \pm 0.935$ & $7.134 \pm 0.855$ & $7.777 \pm 0.723$ & $6.310 \pm 0.704$ \\
& Graph GA$^\dagger$~\citep{jensen2019graph} & $10.249 \pm 1.002$ & $8.793 \pm 0.497$ & $9.211 \pm 0.343$ & $8.769 \pm 0.432$ & $8.652 \pm 0.449$ & $8.900 \pm 0.817$ \\
& SMILES GA~\citep{brown2019guacamol} & $9.334 \pm 0.237$ & $8.335 \pm 0.276$ & $9.052 \pm 0.484$ & $8.560 \pm 0.346$ & $8.268 \pm 0.170$ & $7.984 \pm 0.554$ \\
& REINVENT~\citep{olivecrona2017molecular} & $9.867 \pm 0.522$ & $8.604 \pm 0.483$ & $8.735 \pm 0.120$ & $9.054 \pm 0.153$ & $8.695 \pm 0.370$ & $8.441 \pm 0.535$ \\
\midrule
& No Feedback (Best-of-N) & $6.190 \pm 0.821$ & $8.619 \pm 0.562$ & $8.230 \pm 0.628$ & $8.633 \pm 0.549$ & $8.300 \pm 0.620$ & $8.793 \pm 0.921$ \\
& Random Feedback & $6.604 \pm 0.577$ & $8.385 \pm 0.258$ & $8.276 \pm 0.628$ & $6.780 \pm 0.523$ & $8.793 \pm 0.921$ & $7.993 \pm 0.663$ \\
& Minimal Feedback & $5.863 \pm 0.428$ & $8.779 \pm 0.633$ & $8.507 \pm 0.428$ & $7.998 \pm 0.571$ & $9.439 \pm 0.922$ & $8.420 \pm 0.315$ \\
& TextGrad~\citep{yuksekgonul2024textgrad} & $8.531 \pm 0.278$ & $8.057 \pm 0.383$ & $7.953 \pm 0.160$ & $7.256 \pm 0.886$ & $8.174 \pm 0.395$ & $7.357 \pm 0.821$ \\
& Feedback Descent & $\mathbf{10.623 \pm 0.112}$ & $\mathbf{9.615 \pm 0.158}$ & $\mathbf{9.919 \pm 0.305}$ & $\mathbf{10.187 \pm 0.253}$ & $\mathbf{9.803 \pm 0.267}$ & $\mathbf{9.300 \pm 0.062}$ \\
\bottomrule
\end{tabular}
}
\caption{
\label{tab:dockstring_full}
Full results for molecule optimization on six protein targets with standard deviations.
Fragment-based algorithms (denoted by $\dagger$) operate directly on molecular graphs, giving them structural priors unavailable to purely text-based methods.
For each target, the top generative result is in \textbf{bold}, and any population in the DOCKSTRING database that exceeds the best generative result is {\blueul{underlined}}.
}
\end{table}

\subsection{Prompt Templates}
\label{app:prompts}

We use the following prompt for the judge for the Anatomy SVG task. The rubrics for the other tasks are written in a similar style, translating a particular aesthetic into operational rules that minimize ambiguity.
\begin{promptbox}[Anatomy Judge Rubric]
RUBRIC NAME: Anatomical Realism
INTENT: Believable equine anatomy with a plausible horn; form, proportion, and structure matter most.

NON-NEGOTIABLES:
- Recognizable equine proportions; head, neck, torso, four legs, mane, tail, horn present.
- Limbs connect anatomically; joints and hooves indicated.

CRITICAL BENCHMARKS (must evaluate these first):
1. Head-Neck Proportion: Neck length should be ~1.5x head length; head meets neck high on shoulders
2. Body Square: Body length (shoulder to buttock) ~= height at withers; chest depth ~= elbow height
3. Leg Structure: Proper joint articulation with elbow under withers; fetlock/pastern angles 45-55 deg when standing; all four limbs distinct and correctly connected

WHAT TO REWARD:
- Correct limb count and articulation; mass distribution that could stand or move.
- Horn integrates naturally with the skull (frontal bone center, 2-3" above eye line).
- Subtle shading or line variation conveying volume.
- Ground contact or cast shadow for grounding.
- Visible muscle definition suggesting tension/relaxation appropriate to pose.
- Differentiated hair textures: short coat vs coarse mane/tail strands.
- Anatomical landmarks: withers prominence, gaskin curve.

WHAT TO PENALIZE:
- Missing or fused legs; impossible joints; balloon torsos.
- Flat cardboard profiles with no sense of volume.
- Decorative effects that obscure structure.
- Disney-fied proportions (oversized eyes, baby-like features).
- Horn placement anywhere except frontal bone center (2-3" above eye line).

TIEBREAKERS:
- Prefer the image with more accurate limb/neck/head proportions.
- If both are plausible, choose the one with better weight and grounding.
\end{promptbox}

We use the following prompt templates for candidate generation and rationale generation for prompt optimization.

\begin{promptbox}[System Prompt Template for Prompt Optimization]
Improve the assistant's prompt by extracting actionable insights from the data.

## Goal
Create prompts that generalize well beyond the training examples you see here. The patterns below come from a small sample; your output must work on thousands of unseen cases.

## Current Prompts
**Approach A (Baseline):**
```python
{prompt_a_dict}
```

**Approach B (Challenger):**
```python
{prompt_b_dict}
```

## Training Signals
{comparison}

## Prompt Improvement Strategy

**1. Extract Core Insights**
Identify patterns with strong evidence (low p-value, high lift, good support):
- What fundamental strategies distinguish success from failure?
- What misunderstandings or errors repeatedly occur?
- Are there essential facts or constraints the model needs to know?

**2. Avoid Common Pitfalls**
- Redundancy: Don't say the same thing multiple ways
- Over-specification: Don't list every possible format, constraint, or edge case
- Defensive bloat: Don't add uncertainty handling or safety warnings unless critical
- Surface patterns: Look for deep semantic strategies, not superficial formatting rules
- Enumerationitis: Avoid long numbered checklists; prefer flowing prose

**3. Craft Effective Instructions**
- State principles clearly and concisely
- Use specific language when precision matters ("identify the missing fact" vs "analyze the information")
- Keep instructions proportional to task complexity
- Test in your mind: would this help on examples you haven't seen?

**4. Preserve What Works**
- If baseline is effective and simple, make minimal changes
- Don't fix what isn't broken
- Complexity should buy you something measurable

The prompt must be a Python dictionary with the following keys:
{module_keys_description}

Output EXACTLY in this format:

```python
{prompt_template}
```
\end{promptbox}

\begin{promptbox}[System Prompt Template for Molecule Optimization]
You are a helpful assistant that proposes a single SMILES string corresponding to a small molecule, based on the task explanation and information about previous molecules that have been proposed. Your proposed molecule should score highly on the {benchmark_name} benchmark.

Task explanation: Maximize -{benchmark_name} - 10 * (1 - QED): larger values mean stronger {benchmark_name} binding and high drug-likeness.
{protein_info_xml}

Your output format should be EXACTLY as follows:

<reasoning>[Chemical hypothesis linking structural changes to expected binding/QED improvements]</reasoning>
<smiles>[Single SMILES string]</smiles>

Generate a small molecule based on these previous examples (sorted in descending order of score):

{examples_text}
\end{promptbox}

\begin{promptbox}[Example of Protein Metadata (ADRB1)]
{ 'target': 'ADRB1', 'accession': 'P08588', 'regions': { 'transmembrane': [[56, 84], [94, 120], [133, 154], [173, 196], [223, 248], [320, 349], [355, 377]], 'extracellular': [[1, 55], [121, 132], [197, 222], [350, 354]], 'cytoplasmic': [[85, 93], [155, 172], [249, 319], [378, 477]], 'disordered': [[269, 307], [403, 477]]}, 'critical_residues': {'mutagenesis': [{'position': [474, 474], 'description': 'Loss of interaction with GOPC.'}, {'position': [474, 474], 'description': 'Loss of interaction with GOPC; when associated with A-477.'}, {'position': [475, 475], 'description': 'Loss of interaction with GOPC. Loss of interaction with RAPGEF2. Abolishes agonist-induced Ras activation.'}, {'position': [475, 475], 'description': 'Loss of interaction with RAPGEF2.'}, {'position': [475, 475], 'description': 'Partial loss of interaction with GOPC.'}, {'position': [476, 476], 'description': 'Partial loss of interaction with GOPC.'}, {'position': [477, 477], 'description': 'Loss of interaction with GOPC.'}, {'position': [477, 477], 'description': 'Loss of interaction with RAPGEF2. Abolishes agonist-induced Ras activation.'}], 'natural_variants': [{'position': [26, 26], 'description': 'in dbSNP:rs34844626'}, {'position': [29, 29], 'description': 'in dbSNP:rs35720093'}, {'position': [31, 31], 'description': 'in dbSNP:rs35230616'}, {'position': [49, 49], 'description': 'correlated with low mean resting heart rate and decreased mortality risk in patients with congestive heart failure; dbSNP:rs1801252'}, {'position': [187, 187], 'description': 'found in individuals with short sleep; results in decreased adenylate cyclase-activating adrenergic receptor signaling; decreased protein stability; dbSNP:rs776439595'}, {'position': [389, 389], 'description': 'increased beta1-adrenergic receptor activity; increased basal activity and increased coupling to heterotrimeric G protein Gs that stimulates the adenylyl cyclase; dbSNP:rs1801253'}, {'position': [399, 399], 'description': 'in dbSNP:rs36052953'}, {'position': [405, 405], 'description': 'in dbSNP:rs35705839'}]}}
\end{promptbox}

\begin{promptbox}[Example of Molecule Metadata (CCCCC)]
valid: 'True'
score: '-1.9121449019886678'
metadata:
  CanonicalSMILES: CCCCC
  InChIKey: OFBQJSOFQDEBGM-UHFFFAOYSA-N
  MolecularFormula: C5H12
  ExactMass: '72.093900384'
  FormalCharge: '0'
  AtomCount: '5'
  HeavyAtomCount: '5'
  HeteroAtomCount: '0'
  BondCount: '4'
  Sp3CarbonFraction: '1.0'
  RingCount: '0'
  AromaticRingCount: '0'
  AliphaticRingCount: '0'
  RotatableBondCount: '2'
  StereoCenterCount: '0'
  MurckoScaffold: ''
  LogP: '2.1965000000000003'
  TopologicalPolarSurfaceArea: '0.0'
  MolarRefractivity: '25.19899999999999'
  HBondDonorCount: '0'
  HBondAcceptorCount: '0'
  BertzComplexityIndex: '7.5097750043269365'
  BalabanJIndex: 2.19060968716425
  HallKierAlpha: '0.0'
  Kappa1: '5.0'
  Chi0v: '4.121320343559642'
  TotalEState: 8.5
  MinEState: 1.34375
  MaxEState: 2.2118055555555554
  PEOE_VSA6: '33.10993926815928'
  SlogP_VSA5: '33.10993926815928'
  BCUTp_1h: '13.744962415414642'
  AccessibleSurfaceArea: '34.19901948541599'
  FunctionalGroups: []
  StructuralAlerts: []
  QuantitativeDrugLikeness: '0.4687855098011332'
  SyntheticAccessibility: '1.699621281696647'
  NaturalProductLikeness: '0.09749981667944'
\end{promptbox}

\end{document}